\documentclass{article}

\usepackage{microtype}
\usepackage{graphicx}
\usepackage{subcaption}
\usepackage{booktabs} % for professional tables

\usepackage{hyperref}

\usepackage[accepted]{icml2026}

\usepackage{amsmath}
\usepackage{amssymb}
\usepackage{mathtools}
\usepackage{amsthm}
\usepackage{marvosym}

\usepackage[capitalize,noabbrev]{cleveref}

\usepackage{multirow}
\usepackage{xurl}

\theoremstyle{plain}
\newtheorem{theorem}{Theorem}[section]
\newtheorem{proposition}[theorem]{Proposition}
\newtheorem{lemma}[theorem]{Lemma}
\newtheorem{corollary}[theorem]{Corollary}
\theoremstyle{definition}

\theoremstyle{remark}
\newtheorem{remark}[theorem]{Remark}

\usepackage[textsize=tiny]{todonotes}

\icmltitlerunning{The Devil is in the Spectrum: Mitigating Representation Collapse in LLMs
via Topologically Regularized Side-Path}

\begin{document}

\twocolumn[
  \icmltitle{The Devil is in the Spectrum: Mitigating Representation Collapse in LLMs
via Topologically Regularized Side-Path}

  % It is OKAY to include author information, even for blind submissions: the
  % style file will automatically remove it for you unless you've provided
  % the [accepted] option to the icml2026 package.

  % List of affiliations: The first argument should be a (short) identifier you
  % will use later to specify author affiliations Academic affiliations
  % should list Department, University, City, Region, Country Industry
  % affiliations should list Company, City, Region, Country

  % You can specify symbols, otherwise they are numbered in order. Ideally, you
  % should not use this facility. Affiliations will be numbered in order of
  % appearance and this is the preferred way.
  \icmlsetsymbol{equal}{*}
  \icmlsetsymbol{cor}{\Letter}

    \begin{icmlauthorlist}
      \icmlauthor{Yiheng Tao}{equal,aff1,aff3}
      \icmlauthor{Kaiwen Cheng}{equal,aff1,aff4}
      \icmlauthor{Yao Lu}{aff3,cor}
      \icmlauthor{Chang Liu}{aff5,cor}
      \icmlauthor{Jie Chen}{aff2,aff3,aff1,aff4,cor}
    \end{icmlauthorlist}
    
    \icmlaffiliation{aff1}{School of Electronic and Computer Engineering, Peking University, Shenzhen, China}
    \icmlaffiliation{aff2}{School of Intelligence Science and Engineering, Harbin Institute of Technology, Shenzhen, China}
    \icmlaffiliation{aff3}{Pengcheng Laboratory, Shenzhen, China}
    \icmlaffiliation{aff4}{AI for Science (AI4S)-Preferred Program, Peking University Shenzhen Graduate School, China}
    \icmlaffiliation{aff5}{Department of Automation and BNRist, Tsinghua University, Beijing, China}
    
    % 请将下面的邮箱替换为这三位通讯作者真实的邮箱地址
    \icmlcorrespondingauthor{Yao Lu}{yaolubrain@gmail.com} 
    \icmlcorrespondingauthor{Chang Liu}{liuchang2022@tsinghua.edu.cn}
    \icmlcorrespondingauthor{Jie Chen}{chenj@pcl.ac.cn}
    
    % 你可以在这里提供描述论文的关键词（PDF元数据使用，不会显示在正文中）
    \icmlkeywords{Machine Learning, ICML}

  \vskip 0.3in
]

% this must go after the closing bracket ] following \twocolumn[ ...

% This command actually creates the footnote in the first column listing the
% affiliations and the copyright notice. The command takes one argument, which
% is text to display at the start of the footnote. The \icmlEqualContribution
% command is standard text for equal contribution. Remove it (just {}) if you
% do not need this facility.

% Use ONE of the following lines. DO NOT remove the command.
% If you have no special notice, KEEP empty braces:
% \printAffiliationsAndNotice{}  % no special notice (required even if empty)
% Or, if applicable, use the standard equal contribution text:
\printAffiliationsAndNotice{\icmlEqualContribution}

\begin{abstract}
Large Language Models (LLMs) are fundamentally limited by representation collapse, a bottleneck that severely degrades long-context performance. We identify that existing approaches risk drifting into one of two pathological extremes: homogenization collapse (e.g., attention sinks causing rank deficiency) and isolation collapse (e.g., local attention causing context disconnection). Through spectral analysis of attention dynamics, we derive an intrinsic trade-off between mixing efficiency (spectral gap) and information capacity (effective rank) that standard mechanisms struggle to balance. To resolve this dilemma, we propose the Topologically Regularized Side-Path (TRSP), a non-invasive architectural intervention that achieves spectral balance. TRSP employs a parameter-free Triangular Box mechanism, scaled by a lightweight, length-aware gate, to regularize the token interaction topology. By integrating proximal coupling to preserve effective rank and distal propagation to support non-degenerate mixing, TRSP promotes a geometrically healthier transition operator without altering core attention. Experiments show significant improvements across general capabilities and long-context benchmarks. Notably, on NoLiMa at $8\times$ the training length, TRSP retains $83\%$ accuracy and surpasses the Differential Transformer and Gated Attention by approximately 30 and 50 percentage points, respectively. Code available at: \url{https://github.com/Eziotao-tyd/TRSP}.

\end{abstract}

\begin{figure}[t!]
    \centering
    \includegraphics[width=\linewidth]{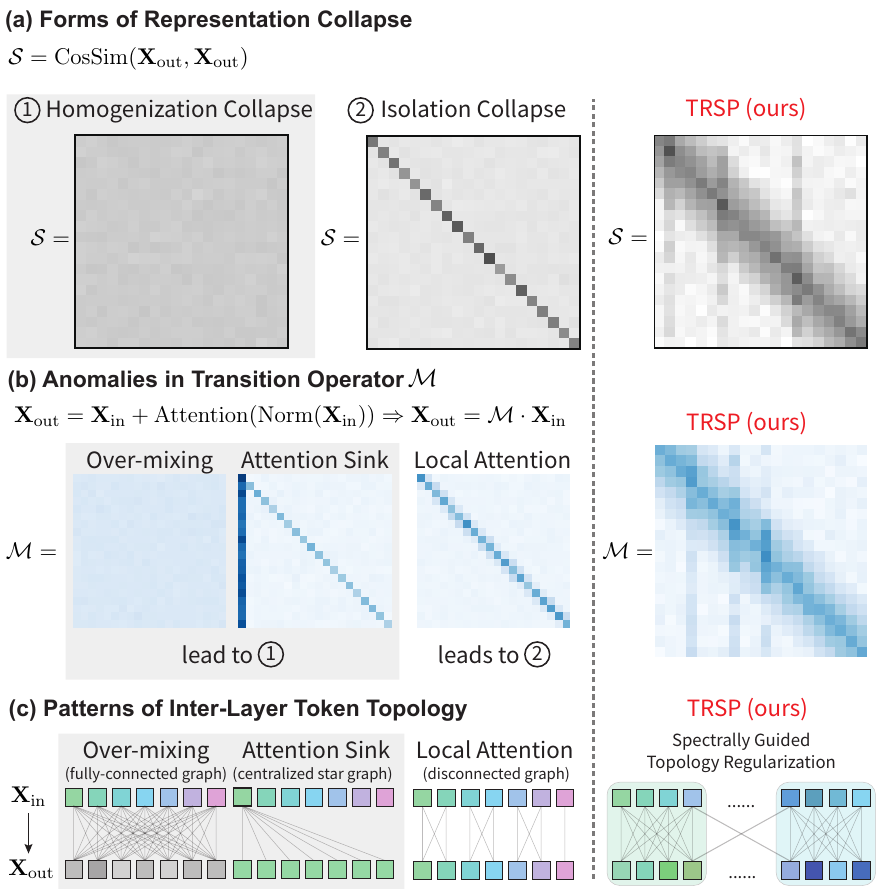}
    \caption{
        \textbf{Analysis of representation collapse.}
        (a) Token Similarity $\mathcal{S}$. Attention may degenerate into homogenization collapse (indistinguishable tokens) or isolation collapse (context failure).
        (b) Transition Operator $\mathcal{M}$. Homogenization may stem from over-mixing or an attention sink, while isolation arises from local attention.
        (c) Topological Connectivity. These pathologies originate from fully connected/centralized star graphs versus disconnected graphs, respectively. By applying spectrally guided topology regularization, our approach constructs a non-degenerate transition operator $\mathcal{M}$ that effectively mitigates representation collapse.}
        \label{fig:motivation}
    \vspace{-1em}
\end{figure}

\begin{figure*}[t!]
    \centering
    \includegraphics[width=\linewidth]{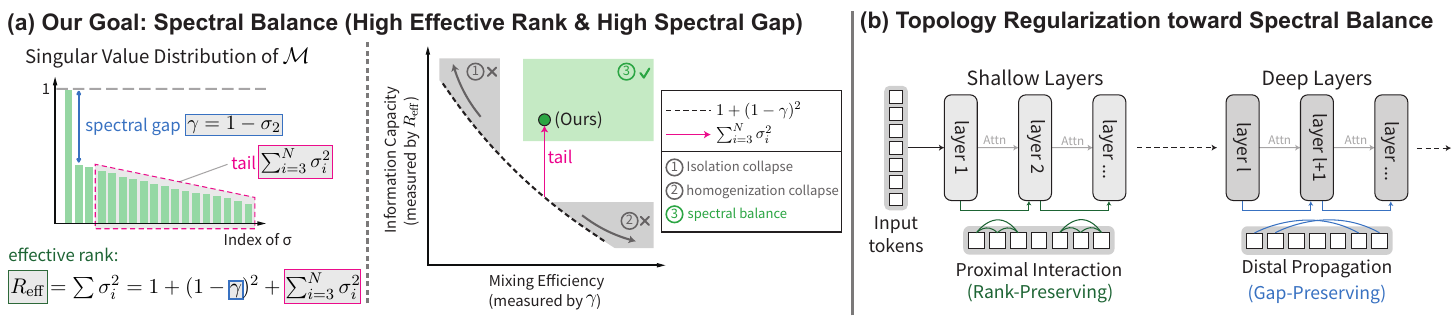}
    \caption{
    \textbf{Achieving Spectral Balance via Topological Regularization.}
    (a) Spectral Trade-off. Analysis of $\mathcal{M}$ reveals an intrinsic trade-off between Information Capacity (measured by effective rank, $R_{\text{eff}}$) and Mixing Efficiency (measured by spectral gap, $\gamma$). The patterns in Fig.~\ref{fig:motivation} collapse into either homogenization (high $\gamma$, low $R_{\text{eff}}$) or isolation (low $\gamma$, high $R_{\text{eff}}$). We require spectral balance to resolve this dilemma. (b) We introduce the Topologically Regularized Side-Path (TRSP) as a non-invasive regularizer. It induces a topology in which proximal interactions in shallow layers preserve feature distinctiveness (Rank-Preserving), while distal propagation in deep layers supports global mixing (Gap-Preserving).
    }
    \label{fig:Intuition}
    \vspace{-0.5em}
\end{figure*}

\section{Introduction}
\label{sec:intro}

Large Language Models (LLMs) achieve strong natural language understanding and generation, yet they suffer from a fundamental pathology of representation collapse, identified in studies of over-mixing and rank collapse~\cite{dong2021attention, Noci_Anagnostidis_Biggio_Orvieto_Singh_Lucchi_2022, wu2024on, NEURIPS2024_b1d35561, saada2025mind} by analyzing how information propagates through transformers: as the context length increases (in \textit{width}), repeated mixing by the attention mechanism (in \textit{depth}) drives token representations toward a low-dimensional, uninformative space.

In parallel with these theoretical findings, empirical research has explored countermeasures. First, the ubiquitous attention sink phenomenon~\cite{xiao2024efficient, gu2025when}—in which massive attention weight is allocated to initial or special tokens—has been interpreted as a natural mechanism to arrest the aforementioned over-mixing~\cite{barbero2025why}. Second, architectural designs based on local or sliding-window attention~\cite{jiang2023mistral7b, gemmateam2024gemma2improvingopen, beltagy2020longformer, child2019generating} restrict the per-layer receptive field to reduce the cost of long-context attention, thereby limiting the immediate scope of token mixing. We argue that both countermeasures avoid one failure mode at the cost of another. Under a unified spectral lens, we categorize these phenomena as two pathological extremes of representation collapse: (1) homogenization collapse (e.g., over-mixing or the attention sink), characterized by critically low effective rank~\cite{wu2024on}; and (2) isolation collapse, characterized by critically low contextual coherence~\cite{Ethayarajh_2019}, in which mixing fails to bridge distant tokens (e.g., truncated local attention).

We visualize these pathologies in Figure~\ref{fig:motivation}. First, the token similarity matrices in (a) show that tokens either degenerate into indistinguishable noise (homogenization) or fail to capture context (isolation). Second, we trace this to the transition operator $\mathcal{M}$ in (b), where $\mathbf{X}_{\text{out}} = \mathcal{M} \cdot \mathbf{X}_{\text{in}}$. The homogenization pattern appears as a dense or sink-dominated operator, whereas isolation corresponds to a banded operator. Both patterns fail to support effective information flow. Third, inter-layer topology analysis in (c) links these behaviors to token connectivity: homogenization stems from fully connected or centralized star graphs, whereas isolation stems from disconnected graphs. This raises a more fundamental question: what spectral conditions characterize a non-degenerate operator $\mathcal{M}$, and why do standard mechanisms fail to satisfy them simultaneously?

To address this, we analyze the spectral properties of the row-normalized operator $\mathcal{M}$ via singular value decomposition, as shown in Figure~\ref{fig:Intuition} (a). We use two metrics whose maximization is desirable: effective rank ($R_{\text{eff}} = \sum \sigma_i^2$)~\cite{10.1145/1255443.1255449, roy2007effective}, which measures information capacity via the heaviness of the singular value distribution's tail, and spectral gap ($\gamma = 1 - \sigma_2$), which measures mixing efficiency via the separation of the leading singular values.\footnote{For notational simplicity, throughout this discussion $\sigma_i$ denotes the singular values normalized by the largest singular value of $\mathcal{M}$; hence $\sigma_1=1$.} We derive an intrinsic relationship between them: $R_{\text{eff}} = 1 + (1-\gamma)^2 + \sum_{i=3}^N \sigma_i^2.$ This reveals a fundamental spectral trade-off: maximizing $\gamma$ inevitably suppresses $R_{\text{eff}}$ unless the tail ($\sum_{i=3}^N \sigma_i^2$) is explicitly preserved. Empirically, models are often trapped in this dilemma: homogenization (high $\gamma$, low $R_{\text{eff}}$), where strong mixing collapses the manifold, or isolation (low $\gamma$, high $R_{\text{eff}}$), where weak mixing preserves dimensionality but halts propagation.

To resolve this trade-off and target spectral balance (i.e., jointly improving $R_{\text{eff}}$ and $\gamma$ without collapsing to either extreme), we propose the Topologically Regularized Side-Path (TRSP). We retain the standard attention computation and augment the transition operator with an additive side-path. This design injects topological regularization without disrupting the core attention mechanism. TRSP enforces a hierarchy that combines proximal coupling (to preserve the tail and $R_{\text{eff}}$) with distal shortcuts (to support high $\gamma$). Specifically, TRSP introduces two key components: (1) the Triangular Box (``triBox'') mechanism, a parameter-free operator implemented via cascaded causal box filters with exponential bandwidth expansion. It creates a seamless scale transition from proximal interactions to distal connections across layers. (2) A long-context gate, a lightweight gain controller. With only $\approx 50$ learnable parameters, the gate regulates side-path strength from context length and layer depth via the coverage ratio, calibrating triBox injection across scales.

We conducted extensive experiments evaluating general capabilities in post-training scenarios (MMLU~\cite{hendrycks2021measuring}, HellaSwag~\cite{zellers2019hellaswag}) and long-context extrapolation in both pre- and post-training settings (RULER~\cite{hsieh2024ruler}, NoLiMa~\cite{modarressi2025nolima}). Our results confirm that TRSP effectively corrects spectral anomalies and consistently outperforms strong baselines. Notably, on NoLiMa, TRSP retains 83\% accuracy at 8$\times$ the training length, surpassing the Differential Transformer~\cite{ye2025differential} and Gated Attention~\cite{qiu2026gated} by approximately 30 and 50 percentage points, respectively.

\section{Related Work}
\label{sec:related_work}

\subsection{Representation Collapse and Spectral Analysis}
Prior work has analyzed Transformer representation collapse along two complementary axes: layer depth and context length. \textit{Along the depth axis,} \citet{dong2021attention} show that pure attention loses rank doubly exponentially with depth, driving representations toward a rank-1 subspace. This pathology has been linked to vanishing gradients, signal-propagation failure, and representation degeneration~\cite{Noci_Anagnostidis_Biggio_Orvieto_Singh_Lucchi_2022, he2023deep, wu2024on, geshkovski2025mathematical, saada2025mind, Gao_He_Tan_Qin_Wang_Liu_2019}. This phenomenon parallels over-smoothing in Graph Neural Networks~\cite{NEURIPS2022_0f956ca6, NEURIPS2023_6e4cdfdd} and has been explicitly formalized within Transformers, where self-attention is shown to behave as a low-pass filter that homogenizes features across layers~\cite{shi2022revisiting, wang2022anti, nguyen2023mitigating}. A complementary perspective shows that whether such smoothing is unavoidable depends on the eigenspectrum of the value and projection weights, leaving room for structural intervention~\cite{dovonon2024setting}. \textit{Along the length axis,} \citet{velickovic2025softmax} show that, under bounded-logit assumptions, increasing context length can drive softmax attention toward uniform mixing, while \citet{NEURIPS2024_b1d35561} identify an over-squashing pathology in causal architectures, in which the unidirectional information flow renders the final-token representations of distinct sequences arbitrarily close. These studies primarily diagnose collapse via spectral and geometric tools~\cite{roy2007effective, pennington2017resurrecting, Ethayarajh_2019}, or attempt to mitigate it through targeted modifications of attention masks, normalization, or weights. We instead (i) unify these homogenization phenomena (both depth- and length-wise) with the opposite extreme of isolation collapse as two ends of a single trade-off between effective rank and spectral gap, and (ii) propose an explicit topological regularizer to balance both simultaneously.

\subsection{Long-Context Modeling and Attention Sink}
Long-context modeling remains a central concern for LLMs, as models often exhibit position-dependent context utilization, performing worse on information located in the middle of long sequences~\cite{liu-etal-2024-lost}.

\textit{Architectural and computational approaches.} Early work explored recurrence~\cite{dai2019transformer} or sparse attention patterns, including the Sparse Transformer~\cite{child2019generating}, Longformer~\cite{beltagy2020longformer}, and BigBird~\cite{zaheer2020big}. Recent advances focus on efficient computation, including FlashAttention and its successors~\cite{dao2022flashattention, NEURIPS2024_7ede97c3} and Ring Attention~\cite{liu2024ringattention}, alongside KV cache compression techniques~\cite{zhang2023h2o, ge2024model, wu-tu-2024-layer}. Positional encodings and extension strategies such as ALiBi~\cite{press2021train}, RoPE~\cite{su2024roformer}, and YaRN~\cite{peng2024yarn} facilitate length extrapolation but are largely orthogonal to the spectral degradation we focus on. Beyond softmax-attention Transformers, alternative architectures such as state-space models~\cite{gu2024mamba} and recurrence-based linear models~\cite{peng2023rwkv} replace attention with subquadratic primitives; in this work, we focus on improving the spectral behavior of the standard softmax-attention Transformer rather than replacing it.

\textit{Attention sink and attention modifications.} The attention sink phenomenon~\cite{xiao2024efficient}, in which massive weight is allocated to the initial token, has prompted competing interpretations. One line of work views sinks as pathological artifacts tied to massive activations~\cite{sun2024massive}, activation outliers~\cite{kaul2025from}, or softmax-induced first-token bias~\cite{gu2025when}, and proposes mitigations via softmax reformulations such as sigmoid attention~\cite{ramapuram2025theory} and softmax-1~\cite{kaul2025from}, or via gating mechanisms~\cite{NEURIPS2023_edbcb758}. A second line actively leverages sinks for streaming inference~\cite{han2023lm, xiao2024efficient}. \citet{barbero2025why} reinterpret sinks as a learned mechanism by which deep Transformers arrest over-mixing. We argue that sinks replace one homogenization mode (uniform over-mixing) with another (first-token concentration), both characterized by low effective rank. Closely related to our work, recent attention modifications—Differential Transformer~\cite{ye2025differential}, which cancels attention noise via the difference of two softmax maps, and Gated Attention~\cite{qiu2026gated}, which applies a query-dependent sigmoid gate after the SDPA output to eliminate sinks—reshape the attention computation itself. In contrast, TRSP introduces a non-invasive side-path that targets the spectral structure of the transition operator $\mathcal{M}$ without altering standard attention, making it complementary to these methods and readily composable with existing architectures.

\section{Methodology}
\label{sec:method}

\begin{figure*}[t]
    \centering
    \includegraphics[width=\textwidth]{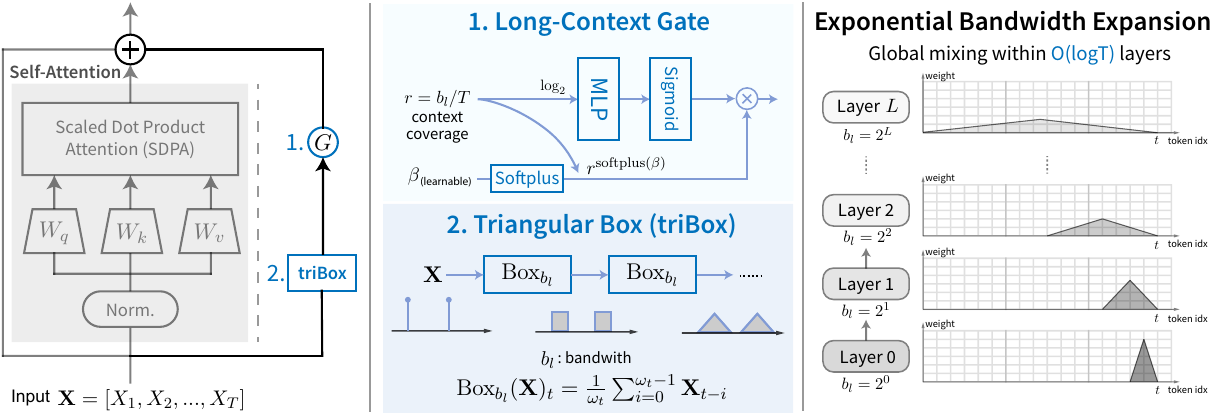}
    \caption{\textbf{Overview of TRSP.}
    Left: The overall architecture showing TRSP as a parallel branch.
    Middle: Detailed implementations of the Long-Context Gate (top) and the triBox mechanism (bottom).
    Right: The hierarchical connectivity pattern showing how bandwidths expand exponentially ($2^\ell$) across layers.}
    \label{fig:method}
    \vspace{-0.5em}
\end{figure*}

\subsection{Overview}
\label{sec:method_overview}
As analyzed in \S\ref{sec:intro}, we introduce the Topologically Regularized Side-Path (TRSP), a lightweight branch added in parallel to the standard attention layer. As illustrated in Figure~\ref{fig:method}, TRSP applies a causal triangular filter with a layer-dependent bandwidth to the hidden states and adds the gated output back to the residual stream, leaving the attention computation unchanged. The triBox branch provides a fixed multi-scale route for long-range signal propagation, and the long-context gate controls its strength based on context coverage. This side-path serves as a non-invasive structural bias toward spectral balance.

\subsection{The Triangular Box (triBox) Mechanism}
\label{sec:swsr_arch}

We design triBox to regularize the layer-wise mixing operator $\mathcal{M}$ towards spectral balance: sufficient mixing to avoid isolation and sufficient capacity to avoid homogenization. Concretely, we target a non-vanishing spectral gap while preventing the singular spectrum of $\mathcal{M}$ from collapsing to rank one. Below, we relate this goal to the two collapse modes in \S\ref{sec:intro} via two quantities.

\noindent \textbf{Mixing efficiency (spectral gap).} For a row-normalized transition operator, the spectral gap $\gamma(\mathcal{M}) = 1-\sigma_2$ quantifies how quickly non-stationary components contract, thereby indicating the strength of global token mixing~\cite{Levin_Peres_2017}. A vanishing gap indicates weak long-range mixing and aligns with isolation collapse. In theory, we study $\gamma$ on $\mathcal{M}$; in experiments, we report the Signal Propagation Rate (SPR) as the perturbation gain $\|\delta_{\mathrm{out}}\|/\|\delta_{\mathrm{in}}\|$, used as an empirical proxy for propagation strength, following Transformer signal-propagation studies~\cite{Noci_Anagnostidis_Biggio_Orvieto_Singh_Lucchi_2022, saada2025mind} and sensitivity viewpoints~\cite{gouk2021regularisation}.

\noindent \textbf{Information capacity (effective rank).} Even with adequate mixing, representations can still collapse if energy concentrates on a few singular directions---the hallmark of homogenization collapse. We therefore monitor the spread of the singular spectrum of $\mathcal{M}$. Following~\cite{roy2007effective}, we use effective rank as a continuous notion of dimensionality; in this paper we quantify it by the stable rank $R_{\text{stab}}(\mathcal{M})=\|\mathcal{M}\|_F^2/\|\mathcal{M}\|_2^2$~\cite{cohen2015optimal}, which coincides with $R_{\text{eff}}=\sum_i \sigma_i^2$ in \S\ref{sec:intro} under singular value normalization $\sigma_1=1$.

\subsubsection{Implementation via Cascaded Box Filters}
triBox is a channel-wise causal triangular convolution implemented by cascading two causal box filters (moving averages), rather than via dense matrix multiplication. Let $\mathbf{X}\in\mathbb{R}^{T\times d}$ denote the token hidden states and let $b\in\mathbb{N}$ be the box window length (i.e., bandwidth). For each channel independently, the causal box filter computes a length-normalized moving average over the past $b$ positions, using only the $t{+}1$ tokens available at time $t$ when $t<b$:
\begin{equation}
    \mathrm{Box}_b(\mathbf{X})_t
    = \frac{1}{w_t}\sum_{i=0}^{w_t-1}\mathbf{X}_{t-i},
    \label{eq:box_filter}
\end{equation}
where $w_t=\min\{b,\,t{+}1\}$ is the effective window size.
A naive sliding window costs $O(Tb)$; we instead maintain prefix sums $\mathbf{S}_0=\mathbf{0}$ and $\mathbf{S}_{t+1}=\mathbf{S}_t+\mathbf{X}_t$.
With start index $a_t=\max\{0,\,t{-}b{+}1\}$, the same filter is evaluated in $O(1)$ per token as
\begin{equation}
    \mathrm{Box}_b(\mathbf{X})_t
    = \frac{\mathbf{S}_{t+1}-\mathbf{S}_{a_t}}{t-a_t+1},
\end{equation}
which is algebraically identical to Eq.~\eqref{eq:box_filter} and costs $O(Td)$ for any $b$.
triBox applies the box filter twice:
\begin{equation}
    \mathrm{triBox}_b(\mathbf{X})=\mathrm{Box}_b\!\big(\mathrm{Box}_b(\mathbf{X})\big).
    \label{eq:tribox_cascade}
\end{equation}
When $t\ge 2b{-}2$ both box windows are full, the cascade is equivalent to a single causal triangular convolution,
\begin{equation}
    \mathrm{triBox}_b(\mathbf{X})_t
    = \sum_{r=0}^{2b-2}h_b(r)\,\mathbf{X}_{t-r},
    \label{eq:tribox_kernel}
\end{equation}
with weights $h_b(r)=\bigl(b-\lvert r-(b-1)\rvert\bigr)/b^{2}$ that decay linearly toward the past and sum to one on $\{0,\ldots,2b{-}2\}$, yielding a smoother frequency response than a single rectangular window.

\subsubsection{Dynamic Bandwidth Expansion}
To cover multiple scales without making each layer dense, the box-filter window length grows exponentially with depth.
For layer $\ell\in\{0,\ldots,D{-}1\}$, we set
\begin{equation}
    b_\ell = \min\{2^\ell,\, T\},
    \label{eq:dyadic_bandwidth}
\end{equation}
where $T$ is the current sequence length and $D$ is the number of layers. At layer $\ell$, triBox is a local triangular mixer with bandwidth $b_\ell$, which limits homogenization within its causal receptive field. Across layers, the dyadic schedule $b_\ell=2^\ell$ superposes connections at scales $\{\pm 2^\ell\}$, inducing a sparse Cayley graph on $\mathbb{Z}_T$ (Figure~\ref{fig:method}, Right). Shallow layers therefore realize proximal coupling (preserving $R_{\text{eff}}$), while the stacked topology provides long-range shortcuts that support global mixing (bounding $\gamma$), matching the division in Figure~\ref{fig:Intuition}.

\subsubsection{Spectral Properties}
We now state the rank and gap guarantees that the triBox topology confers on the composite operator $\mathcal{M}$; full proofs are deferred to Appendix~\ref{app:Cyclic_Cayley_diagram}, and their link to the model's inference-error bound to Appendix~\ref{app:spectral_balance}.

\noindent\textbf{Rank lower bound.} Standard attention can concentrate energy on a single sink token, approaching a rank-one matrix. The triangular kernel instead spreads energy across its window, so the Frobenius energy of the triBox operator grows linearly with length, $\|M_{\mathrm{tri}}\|_F^2=\Theta(T)$. This scaling lower-bounds the effective rank of $\mathcal{M}$ away from one even under sink collapse (Appendix~\ref{app:proof_rank}), so the side-path preserves usable dimensionality as $T$ grows.

\noindent\textbf{Non-degenerate mixing.} Superposing the dyadic bandwidths across layers wires a cyclic Cayley graph on $\mathbb{Z}_T$, whose algebraic connectivity---the unnormalized spectral gap $\mu_2$, i.e., the smallest non-zero Laplacian eigenvalue---is constant in $T$ (Appendix~\ref{app:proof_gap}). This secures robust absolute energy flow, while the corresponding normalized gap decays only as $\Theta(1/\log T)$. A dense variant offsets this decay through depth but costs $O(Td\log T)$ per layer, so we keep the sparse design, retaining the constant $\mu_2$ at $O(Td)$ complexity.

\subsection{The Long-Context Gate}
\label{sec:gate}

triBox fixes the side-path topology and per-layer mixing geometry (\S\ref{sec:swsr_arch}). A fixed scalar injection into the residual stream is brittle: shallow layers with small $b_\ell$ can over-smooth short contexts, and different $(T,\ell)$ pairs require different side-path gains relative to attention (replacing $g_\ell$ with a static scalar reduces MMLU to 21.16\%; \S\ref{sec:analysis}). We introduce the long-context gate, a global, input-agnostic gain $g_\ell$ that depends only on the coverage ratio $r_\ell=b_\ell/T$.

\subsubsection{Formulation}
The $\ell$-th layer update is
\begin{equation}
\label{eq:trsp_update}
    \mathbf{X}_{\mathrm{out}}
    = \mathbf{X}_{\mathrm{in}}
    + \mathrm{Attn}(\mathbf{X}_{\mathrm{in}})
    + g_\ell\,\mathrm{triBox}_{b_\ell}(\mathbf{X}_{\mathrm{in}}),
\end{equation}
where $g_\ell\in(0,1)$ scales the side-path branch. The gate depends on the coverage ratio $r_\ell=b_\ell/T$, i.e., the fraction of the sequence spanned by the local triBox window at layer $\ell$ (Figure~\ref{fig:method}, Middle). With learnable decay exponent $\phi=\mathrm{softplus}(\beta)>0$, we define
\begin{equation}
\label{eq:gate}
    g_\ell = \sigma\big(\mathrm{MLP}(\log_2 r_\ell)\big)\cdot r_\ell^{\phi}.
\end{equation}
Here $\sigma(\cdot)$ is the sigmoid and $\mathrm{MLP}(\cdot)$ is a small Multi-Layer Perceptron; we feed $\log_2 r_\ell$ so that dyadic changes in coverage map to approximately linear inputs for the MLP. The only trainable gate parameters are the MLP weights and the scalar $\beta$ (shared globally across layers and tokens; $\approx 50$ parameters in our setups). Unlike input-dependent gates in Gated Attention~\cite{qiu2026gated}, $g_\ell$ does not depend on hidden states; it calibrates injection from $(T,\ell)$ alone.

\begin{table*}[t]
    \centering
    % \resizebox{\textwidth}{!}{
    \setlength{\tabcolsep}{4pt}
    {\small
    \begin{tabular}{l|c|ccc|cccc}
        \toprule
        \multirow{2}{*}{\textbf{Method}} & \multirow{2}{*}{\textbf{Extra Params}} & \textbf{MMLU} & \multicolumn{2}{c|}{\textbf{HellaSwag}} & \multicolumn{4}{c}{\textbf{Spectral Diagnostics (Avg.)}} \\
        & & Acc $\uparrow$ & Acc $\uparrow$ & PPL $\downarrow$ & SPR $\uparrow$ & Rank $\uparrow$ & Anisotropy $\downarrow$ & Flatness $\uparrow$ \\
        \midrule
        Llama-3.2-1B (Raw) & - & 35.09 & 25.94 & \textbf{1.67} & \textbf{1.60} & 1.40 & 0.21 & 0.42 \\
        LoRA & $5.6$M & 36.01 & 27.97 & 3.58 & 1.51 & 1.36 & 0.22 & 0.42 \\
        LoRA + Gated Attention~\cite{qiu2026gated} & $5.6$M + $67.2$M & 23.19 & 24.79 & 24.94 & 1.59 & 1.44 & 0.19 & 0.44 \\
        \midrule
        \textbf{LoRA + TRSP} (Ours) & $5.6$M + $50$ & \textbf{37.76} & \textbf{29.26} & 3.05 & 1.59 & \textbf{1.93} & \textbf{0.18} & \textbf{0.69} \\
        \bottomrule
    \end{tabular}
    }
    % }
    \caption{\textbf{Main Results on General Capabilities.} We report additional trainable parameters, MMLU and HellaSwag accuracy ($\%$), HellaSwag perplexity (PPL), and empirical spectral diagnostics (Rank denotes Stable Rank; definitions in \S\ref{sec:exp_setup}) averaged across layers and datasets. LoRA + TRSP achieves the highest accuracies with only 50 additional trainable parameters beyond LoRA.}
    \label{tab:main_results}
\end{table*}

\subsubsection{Interaction with Spectral Balance}
Scaling by $g_\ell$ sets the effective weight of the triBox mixing matrix in the composite layer operator $\mathcal{M}=I+A_{\mathrm{attn}}+g_\ell M_{\mathrm{tri}}^{(\ell)}$. The triBox branch itself carries the rank and gap guarantees; the gate specifies \emph{how strongly} that structural component enters the composite operator as $T$ and $\ell$ vary.

\noindent\textbf{Input-agnostic structural gain.}
$g_\ell$ depends only on $(T,\ell)$ via $r_\ell=b_\ell/T$. The side-path injection therefore cannot be suppressed by input-specific activation patterns, attention sinks on particular tokens, or adversarial perturbations of $\mathbf{X}$. Instead, $g_\ell$ is fixed for a given forward pass once the sequence length and layer index are known; it acts as a \emph{structural} gain schedule rather than a content-dependent switch. This isolates a predictable contribution from the topologically regularized branch within $\mathcal{M}$.

\noindent\textbf{Asymptotic stability.} The factor $r_\ell^{\phi}$ captures how side-path gain should scale as $T\to\infty$: because $r_\ell=b_\ell/T$ shrinks with length, a learned $\phi>0$ increases the relative triBox contribution and counters dilution of the fixed topology. In practice, this provides a simple, length-aware calibration rule for how strongly $M_{\mathrm{tri}}^{(\ell)}$ enters the composite operator at each layer.

\noindent\textbf{Remark.} It is important to clarify the scope of these claims. While the proposed topology strictly ensures the spectral gap and effective rank of the TRSP residual branch itself, the spectrum of the final composite operator is subject to interaction with the data-dependent attention matrix. Through additive perturbation theory, injecting a full-rank component establishes rigorous worst-case spectral guarantees for the entire composite network, as detailed in Appendix~\ref{app:Cyclic_Cayley_diagram}.

\begin{table*}[t]
    \centering
    \setlength{\tabcolsep}{6pt}
    {\small
    \begin{tabular*}{\textwidth}{@{\extracolsep{\fill}} l l l c c c c c @{}}
        \toprule
        \textbf{Bench.} & \textbf{Setting} & \textbf{Method} & \textbf{Train} & \textbf{(Extra) Params}
        & $2\times$ & $4\times$ & $8\times$ \\
        \midrule
        \multirow{4}{*}{RULER}
        & \multirow{4}{*}{Fine-tuning}
        & Llama-3.2-1B (Raw) & 4K & --
        & 63.85 & 59.23 & 59.55 \\
        & & LoRA & 4K & $5.6$M
        & 63.82 & 60.50 & 57.51 \\
        & & LoRA + Gated Attention & 4K & $5.6$M$+$67.2M
        & 45.06 & 45.06 & 40.29 \\
        & & \textbf{LoRA + TRSP} (Ours) & 4K & $5.6$M$+$50
        & \textbf{65.68} & \textbf{62.78} & \textbf{60.38} \\
        \midrule
        \multirow{4}{*}{NoLiMa}
        & \multirow{4}{*}{From-scratch}
        & Transformer (Base) & 1K & $109.8$M
        & 99.2 & 80.4 & 23.8 \\
        & & Transformer + Gated Attention & 1K & $113.5$M
        & 98.4 & 79.3 & 33.6 \\
        & & Differential Transformer & 1K & $109.8$M
        & \textbf{100.0} & 90.2 & 53.9 \\
        & & \textbf{Transformer + TRSP} (Ours) & 1K & $109.8$M
        & \textbf{100.0} & \textbf{98.8} & \textbf{83.2} \\
        \bottomrule
    \end{tabular*}
    }
    \caption{\textbf{Long-Context Extrapolation Results.}
    RULER fine-tunes Llama-3.2-1B-Instruct; NoLiMa trains a 109M Transformer from scratch.
    Column headers are the ratio $k$ of evaluation to training context length ($k\in\{2,4,8\}$): RULER uses a 4K training window and is evaluated at 8K, 16K, and 32K; NoLiMa uses 1K training and is evaluated at 2K, 4K, and 8K.
    All entries are accuracy (\%).
    TRSP is best among compared methods at $8\times$ in both blocks.}
    \label{tab:longcontext_results}
\end{table*}
\section{Experiments}
\label{sec:experiments}

We organize the evaluation around four questions: (i) whether TRSP improves standard performance without disrupting the base model, (ii) whether the gains persist when evaluation contexts extend beyond the training window, (iii) whether task-level improvements are accompanied by less-collapsed spectral diagnostics, and (iv) which components are responsible for the observed behavior. We study two complementary settings: post-training on Llama-3.2-1B-Instruct~\cite{dubey2024llama} and training a 109M Llama-style transformer from scratch on NoLiMa. In the post-training setting, we compare against the raw model, LoRA~\cite{hu2022lora}, and Gated Attention~\cite{qiu2026gated}; in the from-scratch setting, we also compare against the Differential Transformer~\cite{ye2025differential}. Detailed configurations are provided in Appendix~\ref{app:setup}.

\subsection{Experimental Setup}
\label{sec:exp_setup}

\noindent \textbf{Post-training setting.}
For general capability evaluation, hyperparameter sensitivity, and ablations, we fine-tune Llama-3.2-1B-Instruct on Alpagasus-5k~\cite{chen2023alpagasus} and evaluate on MMLU~\cite{hendrycks2021measuring} and HellaSwag~\cite{zellers2019hellaswag}. For long-context extrapolation, we fine-tune on RULER~\cite{hsieh2024ruler} with a 4K context window and evaluate at 8K, 16K, and 32K. All post-training variants use the same tuning data and budget where applicable; TRSP is added as a side-path plugin to the LoRA setting.

\noindent \textbf{From-scratch setting.}
To test architectural effects without relying on instruction-tuned priors, we train a 109M-parameter transformer from scratch on NoLiMa~\cite{modarressi2025nolima} with a 1K context window and evaluate extrapolation up to 8K. NoLiMa requires latent associative reasoning without literal surface overlap between queries and targets, making it a controlled stress test of long-range information retention.

\noindent \textbf{Metrics.}
Beyond task accuracy, we monitor four empirical spectral diagnostics, computed per layer on the token representations $H_\ell\in\mathbb{R}^{T\times d}$, with two probing each axis of the trade-off in \S\ref{sec:intro}.
For mixing efficiency, \emph{(i)~Signal Propagation Rate (SPR)} is the perturbation gain $\|\delta_{\mathrm{out}}\|/\|\delta_{\mathrm{in}}\|$ obtained by injecting a small Gaussian perturbation at the input; it gauges how strongly a signal propagates rather than being damped (higher is better), following Transformer signal-propagation~\cite{Noci_Anagnostidis_Biggio_Orvieto_Singh_Lucchi_2022, saada2025mind} and Lipschitz-sensitivity~\cite{gouk2021regularisation} analyses.
For information capacity, \emph{(ii)~Stable Rank} $\|H_\ell\|_F^2/\|H_\ell\|_2^2$~\cite{roy2007effective, cohen2015optimal} estimates the effective dimensionality of the representation (higher is better); a value approaching $1$ is the hallmark of homogenization collapse.
We complement these with two stability indicators.
\emph{(iii)~Spectral Flatness}, the ratio of the geometric to the arithmetic mean of the singular values of $H_\ell$~\cite{gray1974spectral}, equals $1$ for a perfectly flat, well-conditioned spectrum and tends to $0$ as energy concentrates on a few directions; we read it as a proxy for numerical stability, consistent with dynamical-isometry views of well-conditioned learning~\cite{Saxe_McClelland_Ganguli_2013, pennington2017resurrecting}.
\emph{(iv)~Representation Anisotropy}, the average cosine similarity between random token pairs~\cite{Ethayarajh_2019}, measures how concentrated the representation cone is (lower is better). We stress that low anisotropy is beneficial \emph{only} when paired with high stable rank---a regime we term \emph{structured isotropy}---because a near-collapsed representation can also appear locally isotropic; we therefore always interpret anisotropy jointly with stable rank.

\subsection{Main Results: General Capabilities}
\label{sec:main_results}

We first ask whether TRSP improves standard post-training performance while remaining lightweight. Table~\ref{tab:main_results} reports accuracy, perplexity, parameter overhead, and spectral diagnostics after fine-tuning on Alpagasus-5k.

\noindent\textbf{Performance and Efficiency.} On accuracy metrics, LoRA + TRSP performs best among the post-training methods compared, reaching 37.76\% on MMLU and 29.26\% on HellaSwag. This improves on standard LoRA by 1.75 and 1.29 points, respectively, while adding only 50 trainable parameters beyond the LoRA adapters. By contrast, the Gated Attention baseline introduces 67.2M additional parameters in this setup and performs poorly as a post-hoc plugin, suggesting that its benefits may depend on different training dynamics. The raw model retains the lowest HellaSwag perplexity, but TRSP achieves higher downstream accuracy and a lower perplexity than standard LoRA (3.05 vs. 3.58).

\noindent\textbf{Spectral Diagnostics.} The diagnostic metrics align with the proposed spectral interpretation. Standard LoRA slightly reduces the Stable Rank from 1.40 to 1.36, whereas TRSP increases it to 1.93, indicating a less concentrated representation spectrum. TRSP also recovers most of the SPR reduction introduced by LoRA (1.59 vs. 1.51, close to the raw model's 1.60) and achieves the highest Spectral Flatness (0.69) and the lowest Anisotropy (0.18). These trends support the view that the side-path mitigates representation collapse during post-training.

\subsection{Main Results: Long-Context Extrapolation}
\label{sec:extrapolation}

We next test whether TRSP improves performance when evaluation contexts exceed the training window. Table~\ref{tab:longcontext_results} reports both benchmarks under a unified view: RULER under \emph{fine-tuning} and NoLiMa \emph{from scratch}, with columns indexed by the evaluation-to-training length ratio ($2\times$--$8\times$).

\noindent\textbf{RULER (post-training).} We fine-tune the Llama-3.2-1B-Instruct model on the RULER benchmark with a context length of 4K, then evaluate its performance on extended contexts of 8K, 16K, and 32K ($2\times$, $4\times$, and $8\times$ the training window). As shown in Table~\ref{tab:longcontext_results}, standard methods struggle to generalize. Both the raw model and standard LoRA show a clear downward trend as the context length increases. The Gated Attention baseline drops to 40--45\% accuracy, suggesting that simply adding a learnable gate fails to learn a generalization law. In contrast, LoRA + TRSP achieves the highest accuracy at every tested length. Crucially, while the trained baselines (LoRA and Gated Attention) degrade faster as the context grows, TRSP sustains the strongest absolute accuracy, indicating that the spectral balance better preserves signal integrity over long sequences. Detailed per-task accuracy for RULER is provided in Appendix~\ref{app:ruler_details}.

\noindent\textbf{NoLiMa (from scratch).} To isolate the architectural benefits from pre-trained priors, we train models from scratch on the NoLiMa dataset with a fixed 1K context window and test up to 8K. Table~\ref{tab:longcontext_results} compares our method with baselines. The results show that the standard Transformer and Gated Attention variants suffer catastrophic collapse at longer contexts. While the Differential Transformer offers improved robustness at 4K (90.2\%), it still degrades significantly to 53.9\% at 8K. In comparison, Transformer + TRSP maintains near-perfect performance at 4K and retains a remarkably high accuracy of 83.2\% at 8K.

\begin{table}[t]
    \centering
    \small
    \setlength{\tabcolsep}{6pt}
    \begin{tabular*}{\linewidth}{@{\extracolsep{\fill}} l c c c @{}}
        \toprule
        \multirow{2}{*}{\textbf{Method}} & \multicolumn{3}{c}{\textbf{lr}} \\
        \cmidrule(lr){2-4}
        & 6e-4 & 8e-4 & 1e-3 \\
        \midrule
        LoRA & 36.01 & 31.87 & 24.47 \\
        LoRA + Gated Attention & 23.19 & 24.45 & 24.80 \\
        \midrule
        \textbf{LoRA + TRSP} (Ours) & \textbf{37.76} & \textbf{32.10} & \textbf{33.60} \\
        \bottomrule
    \end{tabular*}
    \caption{\textbf{Hyperparameter Sensitivity (MMLU Accuracy \%).} Comparison of models trained with different learning rates (lr). LoRA + TRSP maintains high performance even at high lrs, whereas baselines degrade or collapse, demonstrating the numerical stability provided by spectral regularization.}
    \label{tab:lr_sensitivity}
\end{table}

\begin{table}[t]
    \centering
    \small
    \setlength{\tabcolsep}{5pt}
    \begin{tabular*}{\linewidth}{@{\extracolsep{\fill}}l|c|c|c}
        \toprule
        \textbf{Variant} & \textbf{Extra Params} & \textbf{Acc} & \textbf{Complexity} \\
        \midrule
        \textbf{TRSP (default)} & 50 & 37.76 & $O(Td)$ \\
        w/o long-context gate & 0 & 21.16 & $O(Td)$ \\
        Dense (Full-Sweep) & 50 & \textbf{38.71} & $O(Td\log T)$ \\
        w/o dyadic bandwidth & 50 & 34.84 & $O(Td)$ \\
        w/o triangular kernel & 50 & 31.89 & $O(Td)$ \\
        \bottomrule
    \end{tabular*}
    \caption{\textbf{Component ablation (MMLU accuracy, \%).} All models fine-tune Llama-3.2-1B with LoRA on Alpagasus-5k. The default TRSP uses $g_\ell$, a sparse dyadic topology, and the triangular kernel. Dense (Full-Sweep) trades a higher per-layer cost for a small accuracy gain.}
    \label{tab:ablation}
\end{table}

\begin{figure*}[t]
    \centering
    \includegraphics[width=\textwidth]{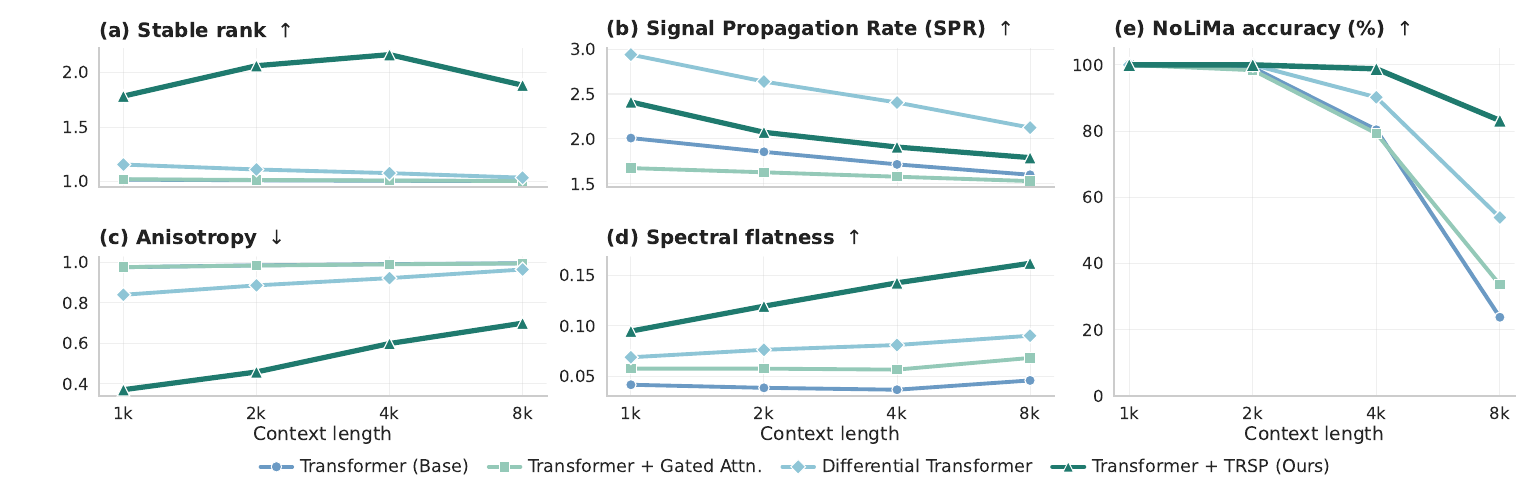}
    \caption{\textbf{Spectral--performance coupling on NoLiMa} (from scratch, train 1K).
    \textbf{(a--d)} Layer-averaged diagnostics (layers 1--13) vs.\ evaluation context at $1\times$--$8\times$ training length.
    TRSP maintains higher stable rank, SPR, and spectral flatness and lower anisotropy than the base Transformer and Gated Attention; the Differential Transformer partially mitigates decay but trails TRSP at long context.
    \textbf{(e)} NoLiMa accuracy (\%); values at $2\times$--$8\times$ follow Table~\ref{tab:longcontext_results}, with all models at $100\%$ at $1\times$.}
    \label{fig:nolima_spectral_performance}
    \vspace{-0.5em}
\end{figure*}

\section{Ablation and Analysis}
\label{sec:analysis}

This section addresses the remaining experimental questions in \S\ref{sec:experiments}, in the order they appear below: (iv) which TRSP components matter for post-training accuracy and optimization, and (iii) whether task gains coincide with healthier spectral diagnostics over increasing context. We use Alpagasus-5k / MMLU for component ablations and learning-rate sensitivity, and NoLiMa (from scratch) for length-wise spectral dynamics.

\subsection{Hyperparameter Sensitivity: Optimization Stability}
Spectral regularization is intended to keep the mixing operator well-conditioned; we therefore test whether TRSP improves robustness to the learning rate in post-training. We fine-tune LoRA, LoRA + Gated Attention, and LoRA + TRSP on Alpagasus-5k and report MMLU accuracy at learning rates $6\times10^{-4}$, $8\times10^{-4}$, and $1\times10^{-3}$ (Table~\ref{tab:lr_sensitivity}).

Standard LoRA is sensitive to this hyperparameter: accuracy falls from 36.01\% to 24.47\% as the learning rate increases, consistent with unstable updates when the operator lacks structural constraints. Gated Attention remains near 23--25\% across all three rates, mirroring its poor post-hoc behavior in \S\ref{sec:main_results} rather than a length-calibration issue. LoRA + TRSP is substantially more stable at $1\times10^{-3}$ (33.60\% vs.\ 24.47\% for LoRA), although its best accuracy still occurs at $6\times10^{-4}$ (37.76\%). Together with the higher spectral flatness in Table~\ref{tab:main_results}, these results are consistent with improved training stability under aggressive optimization.

\subsection{Ablation Studies: Component Analysis}
\label{sec:ablation_components}
We ablate the long-context gate $g_\ell$ (\S\ref{sec:gate}), the dyadic bandwidth schedule $b_\ell=\min\{2^\ell,T\}$ (Eq.~\eqref{eq:dyadic_bandwidth}), the triangular triBox kernel, and the sparse-vs-dense topology (Appendix~\ref{app:Cyclic_Cayley_diagram}). All variants fine-tune Llama-3.2-1B with LoRA on Alpagasus-5k; we report MMLU accuracy in Table~\ref{tab:ablation}.

\paragraph{Long-Context Gate.}
Replacing the coverage-dependent gate with a single static scalar eliminates all 50 gate parameters but reduces accuracy to 21.16\%. A fixed gain cannot match the per-$(T,\ell)$ calibration provided by $r_\ell=b_\ell/T$ and Eq.~\eqref{eq:gate}: it either over-injects triBox locally or leaves the side-path too weak to support global mixing.

\paragraph{Dyadic topology and triangular kernel.}
We compare three structural variants. \textit{Dense (Full-Sweep).} Each layer realizes all bandwidths $2^0,\ldots,2^{D-1}$ in one pass, as in the dense ablation in Appendix~\ref{app:Cyclic_Cayley_diagram}. This achieves 38.71\% MMLU ($+0.95$ pt over default) but costs $O(Td\log T)$ per layer; the default TRSP retains 97.5\% of that accuracy at $O(Td)$. \textit{w/o dyadic bandwidth.} Fixing $b_\ell=T$ at every layer removes the multi-scale schedule and reduces accuracy to 34.84\%. \textit{w/o triangular kernel.} Using a uniform box instead of the cascaded triangular kernel yields 31.89\%, indicating that the triangular smoothing is important for limiting local over-mixing relative to a rectangular window.

\subsection{Spectral Dynamics over Context Length}
\label{sec:spectral_dynamics}
To connect the NoLiMa extrapolation results in \S\ref{sec:extrapolation} to the spectral narrative in \S\ref{sec:intro}, we track layer-averaged diagnostics and task accuracy for the from-scratch models as the evaluation context grows from $1\times$ to $8\times$ the 1K training window (Figure~\ref{fig:nolima_spectral_performance}).

Across this sweep, the base Transformer and Gated Attention show a clear homogenization signature: stable rank and SPR decline, while anisotropy approaches one, indicating that $\mathcal{M}$ loses effective dimensionality and long-range mixing weakens. The Differential Transformer partially slows this decay---especially for SPR---but does not sustain the same separation at $8\times$. TRSP consistently occupies the more balanced regime targeted in \S\ref{sec:intro}: it maintains the highest stable rank and spectral flatness, keeps SPR well above the baselines at long context (consistent with the side-path safety-net view in \S\ref{sec:swsr_arch}), and remains the least anisotropic as length increases.

The same ordering appears in task performance (Figure~\ref{fig:nolima_spectral_performance}e; Table~\ref{tab:longcontext_results}). All models reach near-perfect accuracy at $1\times$, but accuracy diverges sharply at $8\times$, with TRSP retaining $83.2\%$ compared with $23.8\%$ for the base model, $33.6\%$ for Gated Attention, and $53.9\%$ for the Differential Transformer. These context-resolved trajectories show that NoLiMa gains at long evaluation windows co-occur with limiting spectral collapse of $\mathcal{M}$, complementing the improved post-training diagnostics in Table~\ref{tab:main_results} and the trade-off picture in Figure~\ref{fig:Intuition}.

\section{Conclusion}
\label{sec:conclusion}

We studied long-context degradation by examining the spectral behavior of the transition operator $\mathcal{M}$. Empirically and analytically, standard Transformers tend toward two failure modes: homogenization collapse, in which over-mixing or sink-dominated dynamics reduce the effective rank, and isolation collapse, in which restricted mixing preserves local structure but weakens long-range propagation. Both extremes reduce the usable information dimensionality and degrade performance as the evaluation context grows beyond training.

We proposed TRSP, a non-invasive side-path that regularizes token-interaction topology. By combining the parameter-free triBox operator with a lightweight, length-aware gate, TRSP targets spectral balance: sufficient mixing to avoid isolation while preserving rank and isotropy as context lengthens. Across post-training on Llama-3.2-1B and a 109M from-scratch model, TRSP improves MMLU and HellaSwag, extrapolates more reliably on RULER, and retains $83.2\%$ NoLiMa accuracy at $8\times$ a 1K training window---about 30 and 50 percentage points above the Differential Transformer and Gated Attention, respectively. Context-resolved diagnostics further show that these task gains track trajectories with less spectral collapse of $\mathcal{M}$.

The method adds only $\approx 50$ trainable parameters, suggesting that topology-level regularization of the mixing operator is a practical and efficient lever for long-context modeling. We view this spectral framing as complementary to positional extrapolation and kernel-efficiency advances, and hope it helps guide architectures that remain stable as sequence length increases.

\section*{Acknowledgements}
This work was supported in part by the New Generation Artificial Intelligence-National Science and Technology Major Project (No. 2025ZD0122702), the Shenzhen Medical Research Funds in China (No. B2302037), Natural Science Foundation of China (No. U24B6012, 62406167, 61972217, 32071459, 62176249, 62006133, 62271465), AI for Science (AI4S)-Preferred Program, Peking University Shenzhen Graduate School, China, and the Guangdong S\&T Program (2024B0101010003).

\section*{Impact Statement}
This paper presents work whose goal is to advance the field of Machine Learning. There are many potential societal consequences of our work, none of which we feel must be specifically highlighted here.

% ============================================================================

% In the unusual situation where you want a paper to appear in the
% references without citing it in the main text, use \nocite

\bibliography{example_paper}
\bibliographystyle{icml2026}

%%%%%%%%%%%%%%%%%%%%%%%%%%%%%%%%%%%%%%%%%%%%%%%%%%%%%%%%%%%%%%%%%%%%%%%%%%%%%%%
%%%%%%%%%%%%%%%%%%%%%%%%%%%%%%%%%%%%%%%%%%%%%%%%%%%%%%%%%%%%%%%%%%%%%%%%%%%%%%%
% APPENDIX
%%%%%%%%%%%%%%%%%%%%%%%%%%%%%%%%%%%%%%%%%%%%%%%%%%%%%%%%%%%%%%%%%%%%%%%%%%%%%%%
%%%%%%%%%%%%%%%%%%%%%%%%%%%%%%%%%%%%%%%%%%%%%%%%%%%%%%%%%%%%%%%%%%%%%%%%%%%%%%%
\newpage
\appendix
\onecolumn

\section{Detailed Experimental Setup}
\label{app:setup}

This appendix provides the complete implementation details, training configurations, and evaluation protocols used in our experiments.

\subsection{Datasets and Benchmarks}
\label{app:datasets}

\noindent\textbf{Training Datasets.}
We use three datasets, each tailored to a specific setting.
\emph{Alpagasus-5k}~\cite{chen2023alpagasus}, a GPT-4-filtered high-quality subset of Alpaca, supports the general-capability evaluation (MMLU, HellaSwag) and all ablations, simulating a standard instruction-tuning scenario.
\emph{RULER}~\cite{hsieh2024ruler} supports post-training long-context extrapolation: we fine-tune on its training split with a fixed 4K context window so the model learns the task format within a standard window before being tested at 8K, 16K, and 32K.
\emph{NoLiMa}~\cite{modarressi2025nolima} supports the from-scratch experiments. It minimizes lexical overlap between a question and its needle, so that answering requires latent associative reasoning rather than literal matching, relying on the world knowledge of a pretrained model to bridge the question and the needle. Since our 109M models are trained from scratch and lack such priors, we adapt the protocol into a self-contained corpus: for every instance ($100\%$ of both training and evaluation data) we explicitly insert the bridging fact into the haystack, turning each query into a multi-hop chain over distant inserted statements. This keeps the no-literal-matching property while drawing train and test instances from the same template pool, so that long-context failures reflect mechanistic spectral decay rather than distribution shift.

\noindent\textbf{Evaluation Benchmarks.}
General capabilities are evaluated on MMLU~\cite{hendrycks2021measuring} and HellaSwag~\cite{zellers2019hellaswag} with models fine-tuned on Alpagasus-5k. For long-context extrapolation, RULER spans four task categories---Retrieval (NIAH), Multi-hop Tracing (Variable Tracking), Aggregation (Common/Frequent Words Extraction), and Question Answering, $13$ tasks in total---and we evaluate at 8K, 16K, and 32K using models fine-tuned on RULER-4K. NoLiMa is evaluated up to 8K using the from-scratch models trained at 1K, reporting the exact-match accuracy of the retrieved associated value.

\subsection{Model Architectures}

\noindent\textbf{Post-Training Setting (Llama-3.2).}
We build on Llama-3.2-1B-Instruct~\cite{dubey2024llama}. The \emph{Raw} baseline is the unmodified model. \emph{LoRA}~\cite{hu2022lora} adapts the query, key, value, and output projections ($W_q,W_k,W_v,W_o$) of the attention layers with rank $r=8$ and scaling $\alpha=16$. \emph{LoRA + Gated Attention} adds, on top of LoRA, the gated-attention mechanism of~\citet{qiu2026gated}: a query-dependent, head-specific sigmoid gate applied elementwise to the SDPA output, realized by a dense per-layer projection that adds $\approx$67M trainable parameters. \emph{LoRA + TRSP (ours)} adds the triBox side-path and long-context gate in parallel to attention, $\mathbf{X}_{\mathrm{out}}=\mathbf{X}_{\mathrm{in}}+\mathrm{Attn}(\mathbf{X}_{\mathrm{in}})+g_\ell\,\mathrm{triBox}_{b_\ell}(\mathbf{X}_{\mathrm{in}})$ (Eq.~\eqref{eq:trsp_update}); the only extra trainable parameters are the gate MLP and the scalar $\beta$ of Eq.~\eqref{eq:gate} ($\approx 50$ in total).

\noindent\textbf{Pre-Training Setting (Custom Tiny Transformer).}
To isolate architectural effects, we train a 109M-parameter Transformer from scratch whose configuration follows Llama at a smaller scale: hidden size $d_{\mathrm{model}}=512$, $D=14$ layers, $8$ attention heads, MLP ratio $4.0$, a $32{,}000$-token vocabulary (Llama tokenizer), and Rotary Positional Embeddings~\cite{su2024roformer}.
We compare four variants that share the same backbone and parameter count ($\approx$109.8M): the standard Transformer, Transformer + Gated Attention~\cite{qiu2026gated}, the Differential Transformer~\cite{ye2025differential}, and Transformer + TRSP.
Following the official Differential Transformer implementation, we partition the query and key projections into two groups to form two softmax maps whose difference is the attention score, $\mathrm{Attn}=\mathrm{softmax}(Q_1 K_1^\top)-\lambda\,\mathrm{softmax}(Q_2 K_2^\top)$, where $\lambda$ is a per-layer learnable scalar shared across heads and reparameterized as $\lambda=\exp(\lambda_{q_1}\!\cdot\!\lambda_{k_1})-\exp(\lambda_{q_2}\!\cdot\!\lambda_{k_2})+\lambda_{\mathrm{init}}$; we apply per-head GroupNorm to the head outputs and halve the number of heads so that the parameter count matches the standard Transformer.

\subsection{Training Configurations}
All models use the AdamW optimizer with a cosine schedule and a $3\%$ warmup.

\noindent\textbf{Post-training (Alpagasus \& RULER).}
For MMLU/HellaSwag and the ablations we fine-tune on Alpagasus-5k for $3$ epochs with batch size $1$ at a learning rate of $6\times10^{-4}$; the sensitivity study additionally sweeps $8\times10^{-4}$ and $1\times10^{-3}$. For RULER extrapolation we fine-tune on the RULER training set at a fixed 4K context for $3$ epochs at $6\times10^{-4}$.

\noindent\textbf{Pre-training (NoLiMa).}
We train the 109M models from random initialization on the augmented NoLiMa corpus with a fixed 1K context for $5$ epochs at $1\times10^{-4}$, using standard next-token prediction (causal language modeling).

\subsection{Evaluation Protocols}

\noindent\textbf{Spectral metrics.}
We compute the diagnostics of \S\ref{sec:exp_setup} on the layer-wise token representations $H_\ell \in \mathbb{R}^{T \times d}$ (reported in Table~\ref{tab:main_results} and Figure~\ref{fig:nolima_spectral_performance}). \emph{Stable Rank} is $\|H_\ell\|_F^2 / \|H_\ell\|_2^2$, measuring the effective dimensionality. \emph{Anisotropy} is the average cosine similarity between random pairs of token representations; lower values indicate a more isotropic distribution, which we treat as beneficial only when accompanied by high stable rank (structured isotropy). \emph{Signal Propagation Rate (SPR)} is the perturbation gain $\|\delta_{\mathrm{out}}\| / \|\delta_{\mathrm{in}}\|$~\cite{gouk2021regularisation}, measured by injecting a Gaussian perturbation ($\sigma=10^{-3}$) at the embeddings and propagating it through the network. \emph{Spectral Flatness} is the ratio of the geometric to the arithmetic mean of the singular values of $H_\ell$.

\noindent\textbf{Extrapolation testing.}
RULER models trained at 4K are evaluated at 8K, 16K, and 32K, reporting the average score over all sub-tasks; NoLiMa models trained at 1K are evaluated at 1K, 2K, 4K, and 8K, reporting the exact-match accuracy of the generated answer against the gold reference.

\section{Detailed Breakdown of RULER Performance}
\label{app:ruler_details}

We provide the fine-grained performance breakdown across all 13 sub-tasks of the RULER benchmark in Table~\ref{tab:ruler_breakdown}. This detailed view reveals specific failure modes of baseline methods that are masked in the aggregated scores, particularly at the extreme context length of 32K.

\noindent\textbf{Analysis of Sub-Task Performance.}
While the LoRA + Gated Attention baseline remains competitive at 8K, it exhibits a catastrophic collapse as the context extends to 32K. Specifically, on NIAH Multikey 2 and NIAH Multikey 3, its accuracy plummets to near zero (6.3\% and 0.0\% respectively). In contrast, our TRSP maintains high robustness, achieving 93.7\% and 64.5\% on these tasks, indicating that our spectral regularization helps prevent the attention-sink phenomenon from cutting off long-range dependencies.

Regarding the harder tasks like Common Words Extraction (CWE) and Variable Tracking (VT), which require precise state tracking over long distances, TRSP achieves 8.1\% on CWE (vs. 0.0\% for LoRA) and 18.1\% on VT (vs. 9.2\% for LoRA). This roughly doubles VT accuracy (18.1\% vs.\ 9.2\%), consistent with our account: by maintaining a healthier spectral gap and effective rank, TRSP better preserves the distinctness of token states over very long sequences. Standard LoRA degrades sharply at 32K, whereas TRSP mitigates this decay.

For full transparency, we also note the tasks where the raw instruction-tuned model retains a clear edge: on NIAH Multi-Value (NIAH MV), NIAH Multi-Query (NIAH MQ), and Frequent-Words Extraction, the raw model scores far higher (e.g., 80.6\%, 84.7\%, and 63.5\% at 32K) than both LoRA and LoRA + TRSP, which regress to roughly 25--33\%. Because this regression is shared almost identically by LoRA and LoRA + TRSP, it reflects a format/distribution shift induced by RULER fine-tuning rather than a side-effect of the TRSP branch; TRSP's gains instead concentrate on the retrieval- and tracking-heavy tasks (NIAH MK2/MK3, CWE, VT) that most directly stress long-range spectral health.

\begin{table}[h]
\centering
\caption{\textbf{Detailed RULER Sub-Task Performance (8K, 16K, 32K).} We report accuracy (\%) for each task. Raw: Llama-3.2-1B-Instruct. Gate: LoRA + Gated Attention. Ours: LoRA + TRSP. Best results in each group are bolded.}
\label{tab:ruler_breakdown}
\resizebox{\textwidth}{!}{%
\begin{tabular}{l|cccc|cccc|cccc}
\toprule
\multirow{2}{*}{\textbf{Sub-Task}} & \multicolumn{4}{c|}{\textbf{Context Length: 8K}} & \multicolumn{4}{c|}{\textbf{Context Length: 16K}} & \multicolumn{4}{c}{\textbf{Context Length: 32K}} \\
 & Raw & LoRA & Gated & \textbf{Ours} & Raw & LoRA & Gated & \textbf{Ours} & Raw & LoRA & Gated & \textbf{Ours} \\
\midrule
NIAH Single 1 & 87.3 & \textbf{100.0} & 81.0 & \textbf{100.0} & 65.1 & \textbf{100.0} & 61.9 & \textbf{100.0} & 92.1 & \textbf{100.0} & 98.4 & \textbf{100.0} \\
NIAH Single 2 & \textbf{100.0} & \textbf{100.0} & \textbf{100.0} & \textbf{100.0} & 98.4 & \textbf{100.0} & 98.4 & \textbf{100.0} & 93.5 & 98.4 & 98.4 & \textbf{100.0} \\
NIAH Single 3 & 98.4 & \textbf{100.0} & 79.4 & 98.4 & 96.8 & \textbf{100.0} & 68.2 & \textbf{100.0} & \textbf{100.0} & \textbf{100.0} & \textbf{100.0} & \textbf{100.0} \\
NIAH MK 1 & 95.2 & 96.8 & \textbf{100.0} & \textbf{100.0} & 91.9 & 96.8 & 95.2 & \textbf{98.4} & 83.9 & 93.5 & \textbf{100.0} & \textbf{100.0} \\
NIAH MK 2 & 88.7 & 91.9 & 66.1 & \textbf{100.0} & 75.8 & 85.5 & 56.5 & \textbf{100.0} & 90.5 & 74.6 & 6.3 & \textbf{93.7} \\
NIAH MK 3 & 12.9 & 96.8 & 85.5 & \textbf{98.4} & 14.5 & 85.5 & 59.7 & \textbf{88.7} & 27.4 & 62.9 & 0.0 & \textbf{64.5} \\
NIAH MV & \textbf{86.5} & 25.0 & 23.8 & 25.0 & \textbf{82.1} & 25.0 & 23.0 & 24.6 & \textbf{80.6} & 24.6 & 24.6 & 25.0 \\
NIAH MQ & \textbf{93.7} & 25.0 & 20.2 & 25.0 & \textbf{94.4} & 25.0 & 20.2 & 25.0 & \textbf{84.7} & 23.8 & 23.4 & 25.0 \\
Variable Track & \textbf{32.3} & 20.0 & 19.4 & 20.0 & 3.2 & 12.6 & 19.0 & \textbf{19.7} & 4.4 & 9.2 & 1.9 & \textbf{18.1} \\
Common Words & 0.2 & 0.2 & 0.8 & \textbf{6.6} & 0.5 & 0.0 & 0.0 & \textbf{8.7} & 0.8 & 0.0 & 0.2 & \textbf{8.1} \\
Freq Words & \textbf{63.0} & 33.3 & 1.6 & 33.3 & \textbf{78.3} & 32.8 & 9.5 & 32.8 & \textbf{63.5} & 32.8 & 30.7 & 32.3 \\
QA 1 & 40.3 & 59.7 & 6.5 & \textbf{71.0} & 37.1 & \textbf{51.6} & 3.2 & 45.2 & 30.6 & 51.6 & 12.9 & \textbf{53.2} \\
QA 2 & 31.8 & \textbf{81.0} & 1.6 & 76.2 & 31.8 & 68.2 & 0.0 & \textbf{71.4} & 22.2 & \textbf{76.2} & 27.0 & 65.1 \\
\bottomrule
\end{tabular}%
}
\end{table}

\section{Theoretical Analysis of TRSP Topology}
\label{app:Cyclic_Cayley_diagram}

This appendix analyzes the spectral properties of the TRSP mixing operator, establishing (i) a length-independent lower bound on its algebraic connectivity and (ii) a lower bound on its effective rank. Throughout, $\Omega(\cdot)$ and $\Theta(\cdot)$ denote asymptotics in the sequence length $T$, all constants are independent of $T$ unless stated otherwise, and modeling assumptions are made explicit where they are used.

\subsection{Setup and Definitions}
We study the per-channel token-mixing operator induced by one TRSP layer. Acting on the token axis, a single channel of the hidden state is $x\in\mathbb{R}^{T}$, and the (linear) layer update is
\begin{equation}
    x_{\mathrm{out}} = \big(I + A_{\mathrm{attn}} + g\,M_{\mathrm{tri}}\big)\,x =: \mathcal{M}\,x ,
\end{equation}
where $A_{\mathrm{attn}}\in\mathbb{R}^{T\times T}$ is the row-stochastic attention matrix, $M_{\mathrm{tri}}\in\mathbb{R}^{T\times T}$ is the triBox operator, and $g:=g_\ell\in(0,1)$ is the long-context gate of Eq.~\eqref{eq:gate} (fixed within a forward pass).

\noindent\textbf{Symmetrization.} The causal triBox $M_{\mathrm{tri}}$ is lower-triangular (directed). Algebraic connectivity is defined for undirected graphs, so we analyze the symmetrized connection $W_{\mathrm{tri}}:=\tfrac12(M_{\mathrm{tri}}+M_{\mathrm{tri}}^{\top})$ (and likewise $W_{\mathrm{attn}}$); both are symmetric with nonnegative weights. We write $\mathbf{L}_{\bullet}=\mathbf{D}_{\bullet}-W_{\bullet}$ for the corresponding combinatorial Laplacian, with $\mathbf{D}_{\bullet}$ the diagonal degree matrix; each such Laplacian is symmetric positive semidefinite (PSD) with smallest eigenvalue $0$.

\noindent\textbf{Two spectral quantities.} We distinguish (i) the \emph{algebraic connectivity} $\mu_2(\mathbf{L})$, the second-smallest eigenvalue of $\mathbf{L}$ (the Fiedler value), measuring absolute connectivity; and (ii) the \emph{normalized gap} $\gamma$, the second-smallest eigenvalue of the random-walk Laplacian $\mathcal{L}=I-\mathbf{D}^{-1}W$, measuring the per-step contraction rate of the induced random walk. For an $r$-regular graph the two satisfy $\gamma=\mu_2/r$.

\subsection{Topological Structure: A Cyclic Cayley Graph}
We first identify the connectivity that the triBox branch induces across layers.

\begin{theorem}[triBox skeleton as a Cayley graph]
\label{thm:cayley}
Consider the symmetrized connection obtained by superposing the dominant dyadic offsets of the triBox branch across layers $\ell=0,\dots,L$ with $L=\lfloor\log_2 T\rfloor$. Its connection graph is the Cayley graph $\mathrm{Cay}(\mathbb{Z}_T,S)$ on the cyclic group $\mathbb{Z}_T$ with generator set $S=\{\pm 2^\ell : \ell=0,\dots,L\}$.
\end{theorem}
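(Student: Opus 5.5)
The statement is close to a definition once the modeling conventions are fixed, so the plan is to make those conventions explicit and then check that the edge sets coincide. The first step is to make precise what "dominant dyadic offset" means for the layer-$\ell$ triBox. By Eq.~\eqref{eq:tribox_kernel}, in the full-window regime $\mathrm{triBox}_{b_\ell}$ is a causal convolution with kernel $h_{b_\ell}(r)=\bigl(b_\ell-\lvert r-(b_\ell-1)\rvert\bigr)/b_\ell^2$ supported on $r\in\{0,\dots,2b_\ell-2\}$. We can therefore write $M_{\mathrm{tri}}^{(\ell)}=\sum_{r} h_{b_\ell}(r)\,P^{r}$, where $P$ is the (causal) shift.

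We then define the skeleton of layer $\ell$ to be the offset $r=b_\ell=2^\ell$. This offset carries weight $h_{b_\ell}(b_\ell)=(b_\ell-1)/b_\ell^2$. For $\ell=0$ we instead take the unit shift $r=1$, since $b_0=1$ reduces the kernel to the identity; the offset $\pm1$ is in any case generated at $\ell=1$. We will state explicitly that the skeleton keeps exactly one representative offset $2^\ell$ per layer; this is the modeling assumption behind the word ``dominant''. The triangular kernel assigns weight $\Theta(1/b_\ell)$ to this offset, comparable to the peak weight $1/b_\ell$ at $r=b_\ell-1$. Since $2^\ell$ and $2^\ell-1$ lie at the same dyadic scale, this choice is up to constants.

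The second step is symmetrization plus periodization. Passing to $W=\tfrac12(M+M^\top)$ turns each directed offset $+2^\ell$ into the undirected pair $\pm 2^\ell$. We then identify positions modulo $T$, i.e.\ we replace the causal shift $P$ by the cyclic shift $C$ on $\mathbb{Z}_T$. This is the standard circulant approximation of a Toeplitz operator. Boundary effects, namely the truncated windows $w_t<b$ for $t<2b-2$, only alter $O(b_\ell)$ rows and are absorbed into this idealization.

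After these two operations, the superposed adjacency over $\ell=0,\dots,L$ is $\sum_{\ell}(C^{2^\ell}+C^{-2^\ell})$. Its support is exactly the set of edges $\{i,\,i+s\}$ with $s\in S=\{\pm2^\ell\}$. By definition this is $\mathrm{Cay}(\mathbb{Z}_T,S)$. We also check two routine properties. First, $S$ is closed under inverses, so the graph is undirected. Second, $1\in S$ generates $\mathbb{Z}_T$, so the graph is connected and $|S|$-regular, up to coincidences such as $2^L\equiv -2^L$ when $T=2^{L+1}$, which only merge edges. Finally, the choice $L=\lfloor\log_2 T\rfloor$ matches $b_\ell=\min\{2^\ell,T\}$, since layers beyond $L$ add no new offsets modulo $T$.

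The main obstacle is justifying the reduction from the full triangular kernel to its single dyadic offset together with the cyclic wrap-around. Strictly, the causal operator lives on a path rather than a cycle, and the full kernel adds many more edges. I would handle this by presenting the theorem explicitly as a statement about the idealized skeleton. I would then note that the true operator's graph contains this skeleton, up to boundary rows, as a spanning subgraph with edge weights bounded below by $c/b_\ell$. Monotonicity of the Laplacian under adding edges then lets the later gap bounds transfer to the full operator.
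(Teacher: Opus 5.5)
Your proposal is correct and follows the paper's own argument: keep one dyadic offset per layer, symmetrize, read indices modulo $T$ so the adjacency is circulant, and recognize $\mathrm{Cay}(\mathbb{Z}_T,S)$; it is also more explicit than the paper about the periodization and the degenerate $\ell=0$ layer. Two side remarks: with $L=\lfloor\log_2T\rfloor$ the power-of-two coincidence case is $T=2^L$ (where $2^L\equiv0$ and $\pm2^{L-1}$ merge), not $T=2^{L+1}$; and your closing transfer claim is not needed for this theorem and would not follow as stated, because the causal operator lacks the wrap-around edges and skeleton weights of order $1/b_\ell$ do not dominate the unit-weight skeleton (the paper's remark on the sparse skeleton makes the same leap).
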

\begin{proof}
At layer $\ell$ the triBox window of bandwidth $b_\ell=2^\ell$ couples each token to its causal neighborhood, whose longest offset is $2^\ell$. Retaining these dominant offsets and symmetrizing, the superposition over layers connects every $n\in\mathbb{Z}_T$ to $n\pm 2^\ell$ for all $\ell$, i.e. exactly $S$. The connection depends only on the difference $(n-n')\bmod T$, so the symmetrized adjacency $W_{\mathrm{tri}}$ is circulant---the defining property of a Cayley graph on $\mathbb{Z}_T$~\cite{biggs1993algebraic}.
\end{proof}

\begin{remark}[Sparse skeleton versus full band]
\label{rem:skeleton}
The full triBox couples each token to a contiguous band rather than to the single offset $2^\ell$; the band only adds shorter-range edges on top of $S$. Since adding edges can only increase the Laplacian eigenvalues (\cref{lem:weyl}), every lower bound on the algebraic connectivity proved for the sparse skeleton $\mathrm{Cay}(\mathbb{Z}_T,S)$ also holds for the actual (denser) banded connectivity. We therefore analyze the skeleton, without loss of generality for lower bounds.
\end{remark}

\begin{lemma}[Spectrum of the skeleton]
\label{lem:circ-spectrum}
For $\mathrm{Cay}(\mathbb{Z}_T,S)$ with $S=\{\pm2^\ell\}_{\ell=0}^L$, the combinatorial Laplacian $\mathbf{L}_{\mathrm{tri}}$ is circulant with eigenvalues, indexed by Fourier modes $k\in\{0,\dots,T-1\}$,
\begin{equation}
    \mu_k(\mathbf{L}_{\mathrm{tri}}) = 2\sum_{\ell=0}^{L}\Big(1-\cos\big(2^\ell\,\theta_k\big)\Big),\qquad \theta_k=\frac{2\pi k}{T}.
\end{equation}
\end{lemma}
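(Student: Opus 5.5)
The plan is to treat this as a standard circulant computation. Fix unit weights on the generators in $S$. We need $\mathbf{L}_{\mathrm{tri}}=\mathbf{D}-W_{\mathrm{tri}}$ to be circulant, so we write the adjacency as a polynomial in the cyclic shift $P$ (where $(Px)_n=x_{n+1 \bmod T}$):
\[
W_{\mathrm{tri}}=\sum_{\ell=0}^{L}\bigl(P^{2^\ell}+P^{-2^\ell}\bigr).
\]
By \cref{thm:cayley} every vertex has the same degree $2(L+1)$, so $\mathbf{D}=2(L+1)I$. Hence
\[
\mathbf{L}_{\mathrm{tri}}=\sum_{\ell=0}^{L}\bigl(2I-P^{2^\ell}-P^{-2^\ell}\bigr),
\]
which is a polynomial in $P$ and therefore circulant.

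Next we diagonalize. The Fourier vectors $v_k=(e^{i\theta_k n})_{n=0}^{T-1}$ with $\theta_k=2\pi k/T$ form an orthogonal eigenbasis of $P$, with $Pv_k=e^{i\theta_k}v_k$. Consequently $P^{\pm m}v_k=e^{\pm i m\theta_k}v_k$. Applying the expression for $\mathbf{L}_{\mathrm{tri}}$ term by term gives the eigenvalue
\[
\sum_\ell\bigl(2-e^{i2^\ell\theta_k}-e^{-i2^\ell\theta_k}\bigr)=2\sum_\ell\bigl(1-\cos(2^\ell\theta_k)\bigr),
\]
which is the claimed formula. Because the $T$ vectors $v_k$ are linearly independent, this accounts for the full spectrum with multiplicity.

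The main obstacle is the degenerate case where $2^\ell\equiv -2^{\ell'}$ or $2^\ell\equiv 2^{\ell'}\pmod T$. This happens for instance at $\ell=L$ when $T=2^{L}$ or $T=2^{L+1}$, since then $2^L\equiv -2^L$ and $S$ as a set contains fewer than $2(L+1)$ distinct elements. I would handle this by interpreting $S$ as a multiset, i.e.\ $W_{\mathrm{tri}}$ counts edges with multiplicity, one per layer and direction. This matches the superposition over layers in \cref{thm:cayley}, and under that reading the computation above is exact. Under the simple-graph reading, the collapsed terms change only by a factor in the degree and adjacency. I would remark that the formula then holds up to halving the coincident summands, which does not affect the sign or order of growth used later.

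Finally, I would note the consistency checks: $\mu_0=0$ with the constant eigenvector $v_0$, and real symmetric eigenvectors can be obtained by pairing $k$ with $T-k$.
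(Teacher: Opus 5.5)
Your proof is correct and follows the same route as the paper's. Both diagonalize the circulant Laplacian by the Fourier modes $v_k$ and read off a contribution $2-e^{\mathrm{i}2^\ell\theta_k}-e^{-\mathrm{i}2^\ell\theta_k}=2(1-\cos(2^\ell\theta_k))$ from each generator pair. Your multiset reading of $S$ is exactly what the paper tacitly assumes, since it later uses degree $r=|S|=2(L+1)$.

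One small slip is in your degenerate-case example. Since $L=\lfloor\log_2 T\rfloor$ gives $T<2^{L+1}$, the case $T=2^{L+1}$ cannot occur. The actual coincidences are:
\begin{itemize}
\item $T=2^L$, where $2^L\equiv 0$ is a self-loop contributing zero and $2^{L-1}\equiv-2^{L-1}$;
\item $T=2^L+2^\ell$ with $\ell<L$, where $2^L\equiv-2^\ell$.
\end{itemize}
None of these affects your computation under the multiset convention.
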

\begin{proof}
The eigenvectors of a symmetric circulant matrix are the Fourier modes $v_k=(e^{\mathrm{i}\,2\pi kn/T})_{n=0}^{T-1}$~\cite{biggs1993algebraic}. Each generator pair $\{+2^\ell,-2^\ell\}$ contributes degree $2$ and off-diagonal phases $e^{\pm\mathrm{i}2^\ell\theta_k}$, hence a Laplacian eigenvalue contribution $2-\big(e^{\mathrm{i}2^\ell\theta_k}+e^{-\mathrm{i}2^\ell\theta_k}\big)=2\big(1-\cos(2^\ell\theta_k)\big)$. Summing over $\ell$ gives the claim.
\end{proof}

\subsection{Bounded Spectral Gap}
\label{app:proof_gap}

\subsubsection{Safety Net: Monotonicity under Parallel Composition}
\begin{lemma}[Weyl monotonicity]
\label{lem:weyl}
Let $P,Q$ be symmetric with $Q\succeq 0$. Then $\lambda_k(P+Q)\ge\lambda_k(P)$ for every $k$, where eigenvalues are listed in increasing order.
\end{lemma}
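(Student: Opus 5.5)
The plan is to use the Courant--Fischer min--max characterization of the eigenvalues of a real symmetric matrix. Since $P$ and $Q$ are symmetric, so is $P+Q$, and its eigenvalues listed in increasing order satisfy
\begin{equation*}
\lambda_k(P+Q)=\min_{\dim U=k}\ \max_{x\in U,\ \|x\|=1} x^\top (P+Q)x .
\end{equation*}
The same formula holds for $\lambda_k(P)$. The argument compares these two variational problems subspace by subspace, using only the pointwise quadratic-form inequality supplied by $Q\succeq 0$.

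\textbf{Pointwise inequality.} For every unit vector $x$ we have $x^\top Q x\ge 0$, and therefore
\begin{equation*}
x^\top(P+Q)x \;\ge\; x^\top P x .
\end{equation*}

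\textbf{Inner maximum.} Fix any $k$-dimensional subspace $U$. Taking the maximum over unit vectors in $U$ preserves the inequality:
\begin{equation*}
\max_{x\in U,\,\|x\|=1} x^\top(P+Q)x \;\ge\; \max_{x\in U,\,\|x\|=1} x^\top P x .
\end{equation*}

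\textbf{Outer minimum.} Take $U^\star$ attaining the minimum for $P+Q$. Combining the previous step with the fact that the minimum for $P$ is at most its value at $U^\star$ gives
\begin{equation*}
\lambda_k(P+Q) \;\ge\; \max_{x\in U^\star,\,\|x\|=1} x^\top P x \;\ge\; \lambda_k(P),
\end{equation*}
which is the claim.

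There is no real obstacle in this argument. The only care needed is to use the ordering convention consistently (increasing eigenvalues, paired with the min--max form rather than max--min) and to note that symmetry of $P+Q$ makes Courant--Fischer applicable.

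An equivalent and even shorter route is Weyl's inequality $\lambda_k(P+Q)\ge\lambda_k(P)+\lambda_1(Q)$, which gives the result immediately since $\lambda_1(Q)\ge 0$.

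For the way the lemma is used in Remark~\ref{rem:skeleton}, take $P=\mathbf{L}_{\mathrm{skel}}$ and $Q=\mathbf{L}_{\mathrm{extra}}$, the Laplacian of the added nonnegative-weight edges. $Q$ is PSD because its quadratic form is $\sum_{(i,j)} w_{ij}(x_i-x_j)^2\ge 0$. Laplacians of superposed edge sets add, so the lemma transfers the skeleton's lower bound on $\mu_2$ to the denser banded graph.
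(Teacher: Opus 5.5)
Your proposal is correct and follows essentially the same route as the paper: both apply the Courant--Fischer min--max characterization and use the pointwise inequality $x^\top Q x \ge 0$ to compare the two variational problems. Your subspace-by-subspace bookkeeping with an explicit minimizer $U^\star$ is just a more spelled-out version of the paper's one-line comparison of $\min_{\dim V=k}\max_{v\in V}$.
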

\begin{proof}
By the Courant--Fischer min--max theorem,
$\lambda_k(P+Q)=\min_{\dim V=k}\max_{0\ne v\in V}\tfrac{v^{\top}(P+Q)v}{v^{\top}v}\ge\min_{\dim V=k}\max_{0\ne v\in V}\tfrac{v^{\top}Pv}{v^{\top}v}=\lambda_k(P)$,
where the inequality uses $v^{\top}Qv\ge0$.
\end{proof}

\begin{proposition}[Safety net]
\label{prop:safetynet}
The symmetrized composite connection has Laplacian $\mathbf{L}=\mathbf{L}_{\mathrm{attn}}+g\,\mathbf{L}_{\mathrm{tri}}$, and its algebraic connectivity satisfies $\mu_2(\mathbf{L})\ge g\,\mu_2(\mathbf{L}_{\mathrm{tri}})$ for any attention pattern.
\end{proposition}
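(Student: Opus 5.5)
The plan has two parts: first identify the composite Laplacian, then apply \cref{lem:weyl}. For the first part, take the symmetrized composite connection to be $W = W_{\mathrm{attn}} + g\,W_{\mathrm{tri}}$. The identity part of $\mathcal{M}$ only adds self-loops, which cancel in $\mathbf{D}-W$ and so do not affect the Laplacian. Both the degree map and the Laplacian map $W\mapsto \mathbf{D}(W)-W$ are linear in the edge weights. It follows that
\begin{equation*}
\mathbf{D} = \mathbf{D}_{\mathrm{attn}} + g\,\mathbf{D}_{\mathrm{tri}}
\end{equation*}
and
\begin{equation*}
\mathbf{L} = \mathbf{L}_{\mathrm{attn}} + g\,\mathbf{L}_{\mathrm{tri}}.
\end{equation*}
Here I use that $g>0$ is a scalar independent of the input, since it is fixed within a forward pass by \S\ref{sec:gate}.

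The second step is to check the hypotheses of \cref{lem:weyl} with $P = g\,\mathbf{L}_{\mathrm{tri}}$ and $Q = \mathbf{L}_{\mathrm{attn}}$. Both matrices are symmetric. $Q$ is PSD because the symmetrized attention weights are nonnegative: $A_{\mathrm{attn}}$ is row-stochastic, so its entries are nonnegative. For such a graph Laplacian the quadratic form is
\begin{equation*}
v^{\top}Qv = \tfrac12\sum_{i,j}(W_{\mathrm{attn}})_{ij}(v_i-v_j)^2 \ge 0.
\end{equation*}
Applying the lemma with $k=2$ in increasing order gives $\mu_2(\mathbf{L}) \ge \mu_2(g\,\mathbf{L}_{\mathrm{tri}}) = g\,\mu_2(\mathbf{L}_{\mathrm{tri}})$. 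The last equality holds because scaling by $g>0$ preserves the ordering of eigenvalues. Nothing in this argument uses the structure of $A_{\mathrm{attn}}$ beyond nonnegativity, which is why the bound holds for any attention pattern, including sink-dominated ones.

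The step most likely to need care is the first one: justifying that the symmetrized composite Laplacian really decomposes additively. One source of trouble is the identity term. The other is the row normalization, because if one normalized the composite operator afterwards, the mixture would no longer be linear in the weights. I would handle this by stating explicitly that the Laplacian is built from the unnormalized symmetrized weights of each branch, matching the Setup in which $\mathcal{M} = I + A_{\mathrm{attn}} + gM_{\mathrm{tri}}$. I would then note that diagonal entries contribute nothing to $\mathbf{D}-W$. If $M_{\mathrm{tri}}$ is replaced by its Cayley skeleton, \cref{rem:skeleton} together with a second application of \cref{lem:weyl} shows that the bound holds a fortiori for the full banded triBox. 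Combining the result with \cref{lem:circ-spectrum} then gives an explicit lower bound, $g$ times the smallest nonzero Fourier eigenvalue.
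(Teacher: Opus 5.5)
Your proposal is correct and takes the same route as the paper. Additivity of the Laplacian over the weighted union of edge sets gives $\mathbf{L}=\mathbf{L}_{\mathrm{attn}}+g\,\mathbf{L}_{\mathrm{tri}}$ with $\mathbf{L}_{\mathrm{attn}}\succeq 0$, and \cref{lem:weyl} at $k=2$ with $P=g\,\mathbf{L}_{\mathrm{tri}}$, $Q=\mathbf{L}_{\mathrm{attn}}$ then yields the bound; your extra justifications (identity self-loops cancelling in $\mathbf{D}-W$, and the quadratic-form argument for positive semidefiniteness) only spell out what the paper states in one line. Your closing aside about the Cayley skeleton is not needed for the proposition, and note that a single layer's causal, non-cyclic triBox Laplacian does not dominate the cross-layer cyclic skeleton (at $b=1$ it is even zero), so that transfer needs more than \cref{rem:skeleton}.
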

\begin{proof}
The Laplacian of a weighted union of edge sets is the sum of the individual Laplacians, so $\mathbf{L}=\mathbf{L}_{\mathrm{attn}}+g\,\mathbf{L}_{\mathrm{tri}}$ with $\mathbf{L}_{\mathrm{attn}}\succeq 0$. Applying \cref{lem:weyl} with $P=g\,\mathbf{L}_{\mathrm{tri}}$, $Q=\mathbf{L}_{\mathrm{attn}}$ at $k=2$ gives $\mu_2(\mathbf{L})=\lambda_2(\mathbf{L})\ge\lambda_2(g\,\mathbf{L}_{\mathrm{tri}})=g\,\mu_2(\mathbf{L}_{\mathrm{tri}})$.
\end{proof}
\noindent Thus, even if attention disconnects ($\mu_2(\mathbf{L}_{\mathrm{attn}})\to0$), the composite connectivity stays $\ge g\,\mu_2(\mathbf{L}_{\mathrm{tri}})>0$: the side-path is a ``safety net''.

\subsubsection{Constant Algebraic Connectivity of the triBox Skeleton}
\begin{theorem}[Length-independent connectivity]
\label{thm:const-conn}
For every $T\ge 2$, the triBox skeleton satisfies $\mu_2(\mathbf{L}_{\mathrm{tri}})\ge 2$; in particular it is bounded below by a positive constant independent of $T$.
\end{theorem}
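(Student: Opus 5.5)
By \cref{lem:circ-spectrum}, the eigenvalues of $\mathbf{L}_{\mathrm{tri}}$ are $\mu_k=2\sum_{\ell=0}^{L}\big(1-\cos(2^\ell\theta_k)\big)$ with $\theta_k=2\pi k/T$. The mode $k=0$ gives the trivial eigenvalue $0$. So it suffices to show that $\mu_k\ge 2$ for every $k\in\{1,\dots,T-1\}$. Since every summand is nonnegative, I will not estimate the sum as a whole. Instead I will exhibit a single index $\ell^\ast\le L$ with $\cos(2^{\ell^\ast}\theta_k)\le 0$. That one term already contributes $2(1-\cos)\ge 2$, which gives the bound.

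To find $\ell^\ast$, write $x=k/T\in(0,1)$ and let $d_\ell$ be the distance from $2^\ell x$ to the nearest integer. Then $\cos(2^\ell\theta_k)=\cos(2\pi d_\ell)$, and this is $\le 0$ exactly when $d_\ell\in[1/4,1/2]$. The key observation is how the doubling map acts on this distance. Whenever $d_\ell<1/4$, the next step gives $d_{\ell+1}=2d_\ell$ exactly, because doubling a number within $1/4$ of an integer keeps it within $1/2$ of the doubled integer. Also $d_0\ge 1/T$, since $k\ne 0 \pmod T$.

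Define $\ell^\ast$ as the smallest $\ell\ge 0$ with $d_\ell\ge 1/4$.
\begin{itemize}
\item If $d_0\ge 1/4$, take $\ell^\ast=0$.
\item Otherwise $d_\ell=2^\ell d_0$ for all $\ell\le\ell^\ast$. Minimality gives $2^{\ell^\ast-1}d_0<1/4$, so $d_{\ell^\ast}=2^{\ell^\ast}d_0<1/2$. Together with $d_{\ell^\ast}\ge 1/4$, this places $d_{\ell^\ast}$ in $[1/4,1/2)$, as needed.
\end{itemize}

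The step I expect to need the most care is checking that $\ell^\ast$ lies inside the available range $\{0,\dots,L\}$ with $L=\lfloor\log_2T\rfloor$. In the second case, $2^{\ell^\ast}<1/(2d_0)\le T/2$, hence $\ell^\ast<\log_2 T-1$, so $\ell^\ast\le L$. The first case is trivial. The endpoint conventions in the doubling recursion should also be stated precisely, in particular ties at exactly $1/4$ and $1/2$, including the case $x=1/2$.

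Finally, I would remark on small $T$. When $2^\ell\equiv -2^\ell \pmod T$, or when several generators coincide, the generators define a multigraph. The eigenvalue formula of \cref{lem:circ-spectrum} is taken as given, with multiplicity-weighted edges, so the bound holds uniformly for all $T\ge2$. The constant $2$ does not depend on $T$.
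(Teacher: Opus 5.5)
Your proof is correct and follows the paper's strategy. Like the paper, you use the circulant spectrum of \cref{lem:circ-spectrum} and exhibit, for each $k\neq 0$, a single level $\ell\le L$ with $\cos(2^\ell\theta_k)\le 0$, so that one summand alone gives $\mu_k\ge 2$. The only difference is how you find that level: the paper locates it as a bit change $b_{\ell+1}\ne b_{\ell+2}$ in the binary expansion of $k/T$, while you track the doubling of the distance to the nearest integer, which is the same doubling-map fact and even gives $\ell^\ast\le L-1$ in the nontrivial case.
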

\begin{proof}
By \cref{lem:circ-spectrum} it suffices to show that for every nonzero mode $k\in\{1,\dots,T-1\}$ there is some $\ell\in\{0,\dots,L\}$ with $1-\cos(2^\ell\theta_k)\ge1$, i.e. $2^\ell\theta_k\bmod2\pi\in[\tfrac{\pi}{2},\tfrac{3\pi}{2}]$; that single term then gives $\mu_k\ge2$.

Write $\psi:=k/T\in(0,1)$ with binary expansion $\psi=\sum_{j\ge1}b_j2^{-j}$, $b_j\in\{0,1\}$. Then $2^\ell\theta_k\bmod2\pi=2\pi\{2^\ell\psi\}$ and $\{2^\ell\psi\}=\sum_{j\ge1}b_{\ell+j}2^{-j}$ has leading bits $(b_{\ell+1},b_{\ell+2})$. A direct check shows $\{2^\ell\psi\}\in[\tfrac14,\tfrac34)$ iff $b_{\ell+1}\ne b_{\ell+2}$, which gives $2\pi\{2^\ell\psi\}\in[\tfrac{\pi}{2},\tfrac{3\pi}{2})$ and hence $1-\cos\ge1$.

It remains to find a sign change $b_{\ell+1}\ne b_{\ell+2}$ with $\ell\in\{0,\dots,L\}$, i.e. among the first $L+2$ bits of $\psi$. If there were none, then $b_1=\dots=b_{L+2}$: all $0$ forces $\psi<2^{-(L+2)}$, all $1$ forces $\psi\ge1-2^{-(L+2)}$. But $1\le k\le T-1$ gives $\psi\in[\tfrac1T,1-\tfrac1T]$, and $L=\lfloor\log_2T\rfloor$ yields $2^{-(L+2)}=2^{-L}/4<1/T$ (since $2^{-L}<2/T$), contradicting both cases. Hence a sign change exists and $\mu_k\ge2$ for all $k\ne0$, so $\mu_2(\mathbf{L}_{\mathrm{tri}})=\min_{k\ne0}\mu_k\ge2$.
\end{proof}

\subsubsection{Normalized Gap and the Sparse/Dense Trade-off}
We first note $\mu_2(\mathbf{L}_{\mathrm{tri}})=\Theta(1)$: the lower bound $\ge2$ is \cref{thm:const-conn}, and for the matching upper bound the mode $k=1$ (with $\theta_1=2\pi/T$) gives, via $1-\cos x\le x^2/2$,
\begin{equation}
    \mu_2\le\mu_1=2\sum_{\ell=0}^{L}\big(1-\cos(2^\ell\theta_1)\big)\le\theta_1^2\sum_{\ell=0}^{L}4^\ell\le\Big(\frac{2\pi}{T}\Big)^2\frac{4^{L+1}}{3}\le\frac{16\pi^2}{3},
\end{equation}
using $4^{L+1}\le4T^2$ from $2^L\le T$. The skeleton is $r$-regular with degree $r=|S|=2(L+1)=\Theta(\log T)$, so by $\gamma=\mu_2/r$,
\begin{equation}
    \gamma_{\mathrm{tri}}=\frac{\mu_2(\mathbf{L}_{\mathrm{tri}})}{r}=\Theta\!\Big(\frac{1}{\log T}\Big).
\end{equation}
The \emph{unnormalized} connectivity is thus constant while the \emph{normalized} (per-step) gap decays as $\Theta(1/\log T)$. Two implementations trade these off.
\emph{Dense (ablation).} Each layer realizes all offsets $2^0,\dots,2^{D-1}$, so a single layer attains $\gamma=\Theta(1/\log T)$; if additionally the depth scales as $D=\Theta(\log T)$, the depth-composed contraction is $(1-\gamma)^{D}=\big(1-\Theta(1/\log T)\big)^{\Theta(\log T)}=\Theta(1)$ (a constant global normalized gap), at the cost of $O(Td\log T)$ per-layer work.
\emph{Standard (TRSP).} The offsets are distributed across layers, so the Cayley skeleton forms only globally by superposition; this retains the constant unnormalized connectivity $\mu_2=\Omega(1)$ (\cref{thm:const-conn}) at $O(Td)$ cost, while the global normalized gap stays $\Theta(1/\log T)$.
We adopt the standard implementation: by \cref{prop:safetynet} the constant unnormalized connectivity already rules out the disconnection (isolation) failure mode, and the dense variant serves only as a theoretical upper bound. The assumption $D=\Theta(\log T)$ is used only for the dense variant.

\subsection{Effective Rank Lower Bound}
\label{app:proof_rank}
We use the stable rank as the effective-rank surrogate (consistent with \S\ref{sec:swsr_arch}), $R_{\mathrm{eff}}(\mathcal{M})=\|\mathcal{M}\|_F^2/\|\mathcal{M}\|_2^2$.

\begin{proposition}[Frobenius energy]
\label{prop:froben}
Let the triBox at the layer of interest have bandwidth $b$. Then
\begin{equation}
    \|\mathcal{M}\|_F^2\ \ge\ T+g^2\|M_{\mathrm{tri}}\|_F^2\ =\ T\,(1+g^2 E_b),
\end{equation}
where $E_b:=\|M_{\mathrm{tri}}\|_F^2/T$ is the per-row energy of the normalized triangular kernel $\mathbf{h}$, satisfying $E_b=\|\mathbf{h}\|_2^2=\tfrac{2b^2+1}{3b^3}=\Theta(1/b)$. In particular $E_b=\Theta(1)$ for proximal layers with bounded bandwidth $b=O(1)$.
\end{proposition}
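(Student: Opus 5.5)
The plan is to expand the squared Frobenius norm of the composite operator $\mathcal{M}=I+A_{\mathrm{attn}}+g\,M_{\mathrm{tri}}$ using the Frobenius inner product $\langle P,Q\rangle_F=\operatorname{tr}(P^{\top}Q)$, and to discard every cross term by a sign argument. Concretely,
\begin{equation*}
\|\mathcal{M}\|_F^2=\|I\|_F^2+\|A_{\mathrm{attn}}\|_F^2+g^2\|M_{\mathrm{tri}}\|_F^2+2\operatorname{tr}(A_{\mathrm{attn}})+2g\operatorname{tr}(M_{\mathrm{tri}})+2g\langle A_{\mathrm{attn}},M_{\mathrm{tri}}\rangle_F .
\end{equation*}
Here $\|I\|_F^2=T$. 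The remaining terms are handled using only entrywise nonnegativity:
\begin{itemize}
\item $A_{\mathrm{attn}}$ is row-stochastic, so its entries are nonnegative.
\item $M_{\mathrm{tri}}$ has nonnegative entries, since by Eq.~\eqref{eq:tribox_cascade} it is a product of two causal averaging matrices.
\item $g\in(0,1)$.
\end{itemize}
Hence the traces are nonnegative, the Frobenius inner product of two entrywise-nonnegative matrices is nonnegative, and $\|A_{\mathrm{attn}}\|_F^2\ge 0$. Dropping all of these gives $\|\mathcal{M}\|_F^2\ge T+g^2\|M_{\mathrm{tri}}\|_F^2$. This bound holds for an arbitrary attention pattern, including a sink, which is the point of the rank claim in \S\ref{sec:swsr_arch}.

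Next, I would compute the per-row energy of the triangular kernel of Eq.~\eqref{eq:tribox_kernel}. The weights are $h_b(r)=(b-|r-(b-1)|)/b^2$ for $r=0,\dots,2b-2$. As $r$ runs over this range, the numerator takes the values $1,2,\dots,b,\dots,2,1$, so the value $b$ appears once and each $j<b$ appears twice. Therefore
\begin{equation*}
\|\mathbf{h}\|_2^2=\frac{1}{b^4}\Big(2\sum_{j=1}^{b}j^2-b^2\Big)=\frac{1}{b^4}\Big(\frac{b(b+1)(2b+1)}{3}-b^2\Big)=\frac{2b^2+1}{3b^3},
\end{equation*}
which is $\Theta(1/b)$, and $\Theta(1)$ whenever $b=O(1)$.

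The step I expect to be the main obstacle is the identification $\|M_{\mathrm{tri}}\|_F^2=T\,\|\mathbf{h}\|_2^2$. Equation~\eqref{eq:tribox_kernel} holds only for rows $t\ge 2b-2$. The first $2b-2$ rows are built from truncated, length-normalized windows $w_t=\min\{b,t+1\}$, so their squared norms differ from $\|\mathbf{h}\|_2^2$.

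I would handle this in two ways.
\begin{itemize}
\item First, state the identity under the steady-state (equivalently, circular) convention that is already used for the Cayley skeleton in Theorem~\ref{thm:cayley}. Under that convention every row is a shift of $\mathbf{h}$, and the identity is exact.
\item Second, for the causal operator, bound the boundary rows directly. Each such row is a probability vector supported on at most $t+1\le 2b-1$ entries. By Cauchy--Schwarz its squared norm is at least $1/(t+1)$, which is already of order $\Theta(1/b)$ and hence matches $E_b$ up to constants.
\end{itemize}
This yields $\|M_{\mathrm{tri}}\|_F^2=T\,E_b$ with $E_b=\Theta(1/b)$ in general. The exact value $\frac{2b^2+1}{3b^3}$ holds up to an $O(b)/T$ boundary correction, which vanishes for proximal layers as $T\to\infty$. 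Combining this with the first display gives $\|\mathcal{M}\|_F^2\ge T(1+g^2E_b)$.
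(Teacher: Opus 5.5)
Your proposal is correct and follows the paper's proof essentially step for step: you use the same expansion of $\|I+A_{\mathrm{attn}}+g\,M_{\mathrm{tri}}\|_F^2$, discard the traces and $\langle A_{\mathrm{attn}},M_{\mathrm{tri}}\rangle_F$ by entrywise nonnegativity, and evaluate $\sum_{m=-(b-1)}^{b-1}(b-|m|)^2=b(2b^2+1)/3$ in the same way. Your treatment of the $2b-2$ truncated rows is more careful than the paper, which just sums ``interior'' rows, but Cauchy--Schwarz only gives the lower half $E_b=\Omega(1/b)$ (the direction the rank theorem needs); the upper half requires $T$ large relative to $b$, and at $b=T$ the same bound gives $\|M_{\mathrm{tri}}\|_F^2\ge\sum_{t<T}1/(t+1)\approx\ln T$, so you should drop the claim ``$\Theta(1/b)$ in general''.
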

\begin{proof}
Write $\mathcal{M}=I+B$ with $B=A_{\mathrm{attn}}+g\,M_{\mathrm{tri}}$. Then
$\|\mathcal{M}\|_F^2=\|I\|_F^2+2\langle I,B\rangle+\|B\|_F^2=T+2\,\mathrm{tr}(B)+\|B\|_F^2$.
Both $A_{\mathrm{attn}}$ and $M_{\mathrm{tri}}$ are entrywise nonnegative, so $\mathrm{tr}(B)\ge0$; expanding $\|B\|_F^2=\|A_{\mathrm{attn}}\|_F^2+2g\langle A_{\mathrm{attn}},M_{\mathrm{tri}}\rangle+g^2\|M_{\mathrm{tri}}\|_F^2$ with the entrywise cross term $\langle A_{\mathrm{attn}},M_{\mathrm{tri}}\rangle\ge0$ gives $\|B\|_F^2\ge g^2\|M_{\mathrm{tri}}\|_F^2$. Hence $\|\mathcal{M}\|_F^2\ge T+g^2\|M_{\mathrm{tri}}\|_F^2$. For the kernel energy, each (interior) row of $M_{\mathrm{tri}}$ is the normalized triangular kernel $\mathbf{h}$ with $h(r)=\big(b-|r-(b-1)|\big)/b^2$ for $r=0,\dots,2b-2$; a direct computation gives
\begin{equation}
    \|\mathbf{h}\|_2^2=\frac{1}{b^4}\sum_{m=-(b-1)}^{b-1}\big(b-|m|\big)^2=\frac{1}{b^4}\cdot\frac{b(2b^2+1)}{3}=\frac{2b^2+1}{3b^3}=\Theta(1/b).
\end{equation}
Summing the $T$ rows, $\|M_{\mathrm{tri}}\|_F^2=T\cdot\Theta(1/b)$, so $E_b=\Theta(1/b)$.
\end{proof}

\begin{theorem}[Effective rank does not collapse at proximal layers]
\label{thm:rank}
Consider a proximal layer with bounded bandwidth $b=O(1)$ in $T$, and suppose attention degenerates to a rank-one sink with $\|A_{\mathrm{attn}}\|_2=\Theta(\sqrt T)$ (e.g. $A_{\mathrm{attn}}=\mathbf{1}e_1^{\top}$). Then
\begin{equation}
    \liminf_{T\to\infty} R_{\mathrm{eff}}(\mathcal{M})\ \ge\ 1+g^2E_b\ >\ 1 .
\end{equation}
\end{theorem}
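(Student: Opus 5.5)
The plan is to bound the numerator and denominator of $R_{\mathrm{eff}}(\mathcal{M})=\|\mathcal{M}\|_F^2/\|\mathcal{M}\|_2^2$ separately. The aim is to show that the numerator exceeds $\|A_{\mathrm{attn}}\|_2^2$ by an additive term of order $T(1+g^2E_b)$, while the denominator exceeds $\|A_{\mathrm{attn}}\|_2^2$ only by lower-order terms. Since $\|A_{\mathrm{attn}}\|_2^2=\Theta(T)$, the ratio then tends to at least $1$ plus a strictly positive constant. I will in fact aim for the slightly stronger $\liminf\ge 2+g^2E_b$, which implies the claim.

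\textbf{Numerator.} I would reuse the expansion in the proof of \cref{prop:froben}, this time keeping the attention term instead of discarding it:
\[
\|\mathcal{M}\|_F^2 = T + 2\,\mathrm{tr}(B) + \|A_{\mathrm{attn}}\|_F^2 + 2g\langle A_{\mathrm{attn}},M_{\mathrm{tri}}\rangle + g^2\|M_{\mathrm{tri}}\|_F^2 .
\]
By entrywise nonnegativity of $A_{\mathrm{attn}}$ and $M_{\mathrm{tri}}$, and using $\|A\|_F\ge\|A\|_2$, this gives
\[
\|\mathcal{M}\|_F^2\ge T+\|A_{\mathrm{attn}}\|_2^2+g^2\|M_{\mathrm{tri}}\|_F^2 .
\]
For the triBox term, only the first $2b-2=O(1)$ rows differ from the interior kernel $\mathbf{h}$. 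Hence $\|M_{\mathrm{tri}}\|_F^2=TE_b+O(1)$, which is harmless under $\liminf$. (For the concrete sink $\mathbf{1}e_1^{\top}$ one even has $\|A_{\mathrm{attn}}\|_F^2=T$ exactly.)

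\textbf{Denominator.} By the triangle inequality,
\[
\|\mathcal{M}\|_2\le 1+\|A_{\mathrm{attn}}\|_2+g\|M_{\mathrm{tri}}\|_2 .
\]
I need $\|M_{\mathrm{tri}}\|_2=O(1)$ uniformly in $T$, and I would get it from $\|M\|_2\le\sqrt{\|M\|_1\|M\|_\infty}$. Rows of $M_{\mathrm{tri}}$ sum to $1$, since each box filter is length-normalized. Column sums are bounded by a constant depending only on $b$: a column index is touched by at most $2b-1$ rows, each with weight at most $1$, so the column sum is at most $2b-1$. Since $b=O(1)$, this gives $\|M_{\mathrm{tri}}\|_2\le\sqrt{2b-1}=O(1)$. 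The same inequality also caps the attention term: $A_{\mathrm{attn}}$ is row-stochastic, so $\|A_{\mathrm{attn}}\|_\infty=1$ and $\|A_{\mathrm{attn}}\|_1\le T$, whence $\|A_{\mathrm{attn}}\|_2\le\sqrt T$. Writing $a:=\|A_{\mathrm{attn}}\|_2=\Theta(\sqrt T)$ and $C:=1+g\sqrt{2b-1}$, we obtain $\|\mathcal{M}\|_2^2\le (a+C)^2=a^2(1+O(T^{-1/2}))$.

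\textbf{Combining.} Dividing the two bounds gives
\[
R_{\mathrm{eff}}(\mathcal{M})\ \ge\ \frac{a^2+T(1+g^2E_b)-O(1)}{a^2\,(1+O(T^{-1/2}))}\ \ge\ \Big(1+\frac{T}{a^2}(1+g^2E_b)\Big)\big(1-o(1)\big).
\]
Since $a^2\le T$, we have $T/a^2\ge1$, so $\liminf_{T\to\infty} R_{\mathrm{eff}}\ge 2+g^2E_b>1+g^2E_b$. Here $g$ is fixed in $(0,1)$, and $E_b=\Theta(1)$ because $b$ is bounded.

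\textbf{Main obstacle.} The step needing the most care is the uniform-in-$T$ operator-norm control of $M_{\mathrm{tri}}$ near the causal boundary. For $t<b$ the effective window $w_t=t+1$ is short, so early rows put weight up to $1$ on early columns. I need the column-sum bound to hold despite this, which is why I bound each column by the count of rows touching it rather than assuming doubly-stochastic structure. A secondary point is that the hypothesis $\|A_{\mathrm{attn}}\|_2=\Theta(\sqrt T)$ is used only to make $C$ negligible relative to $a$. The fact that $a^2$ never exceeds $T$ comes from row-stochasticity, so the margin $g^2E_b$ survives in the limit.
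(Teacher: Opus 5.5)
Your argument is correct and has the same skeleton as the paper's proof. Both use a Frobenius lower bound from the nonnegative expansion of Proposition~\ref{prop:froben}, and an operator-norm upper bound from the triangle inequality plus $\|M\|_2\le\sqrt{\|M\|_1\|M\|_\infty}$ for the banded row-stochastic $M_{\mathrm{tri}}$. The difference is in how you balance the two bounds. The paper discards $\|A_{\mathrm{attn}}\|_F^2$ and divides $T(1+g^2E_b)$ by $\|\mathcal M\|_2^2\le T+o(T)$. That only needs $\|A_{\mathrm{attn}}\|_2\le\sqrt T$, which every row-stochastic attention satisfies, so the sink hypothesis is not really used there. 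You keep $\|A_{\mathrm{attn}}\|_F^2\ge a^2$ and divide by $(a+C)^2$. This uses the lower half of $a=\Theta(\sqrt T)$ to make $C/a\to0$, and in return gives the sharper $\liminf R_{\mathrm{eff}}\ge 2+g^2E_b$. The extra $1$ is genuine: for $A_{\mathrm{attn}}=\mathbf 1e_1^{\top}$ and $g=0$, the matrix $I+\mathbf 1e_1^{\top}$ has squared Frobenius norm $2T+2$ and squared spectral norm $T+O(1)$, so its stable rank already tends to $2$. Your version therefore shows that, under a sink, the residual identity is what separates $R_{\mathrm{eff}}$ from $1$, and the side-path adds an increment of at least $g^2E_b$ on top. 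You are also more careful than the paper on two minor points. First, you include the $O(1)$ boundary-row correction in $\|M_{\mathrm{tri}}\|_F^2=TE_b+O(1)$, which the paper glosses over. Second, you bound column sums by $2b-1$ through counting, which avoids the boundary harmonic sums altogether. In fact, for the two-stage cascade the first column sums to order $\log^2 b$, not the paper's stated $O(\log b)$. Both points are immaterial for bounded $b$.
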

\begin{proof}
The banded row-stochastic $M_{\mathrm{tri}}$ has $\|M_{\mathrm{tri}}\|_\infty=1$ (row sums) and $\|M_{\mathrm{tri}}\|_1=O(\log b)$ (column sums, the harmonic factor arising only at the first $O(b)$ boundary columns), so $\|M_{\mathrm{tri}}\|_2\le\sqrt{\|M_{\mathrm{tri}}\|_1\|M_{\mathrm{tri}}\|_\infty}=O(\sqrt{\log b})=o(\sqrt T)$. By the triangle inequality $\|\mathcal{M}\|_2\le\|I\|_2+\|A_{\mathrm{attn}}\|_2+g\|M_{\mathrm{tri}}\|_2=\sqrt T+o(\sqrt T)$, so $\|\mathcal{M}\|_2^2\le T+o(T)$. Combining with \cref{prop:froben},
\begin{equation}
    R_{\mathrm{eff}}(\mathcal{M})=\frac{\|\mathcal{M}\|_F^2}{\|\mathcal{M}\|_2^2}\ \ge\ \frac{T(1+g^2E_b)}{T+o(T)}=\frac{1+g^2E_b}{1+o(1)}\ \xrightarrow[T\to\infty]{}\ 1+g^2E_b .
\end{equation}
Since $g>0$ and $E_b=\Theta(1)>0$ for bounded $b$, the limit exceeds $1$: at proximal layers the side-path keeps the effective rank bounded away from the rank-one value to which a pure sink collapses. For wide (distal) layers $b=\Theta(T)$ one has $E_b=\Theta(1/T)$, so the side-path there contributes to global mixing (\cref{app:proof_gap}) rather than to rank preservation---precisely the proximal/distal division of \S\ref{sec:swsr_arch}.
\end{proof}

\begin{remark}[Ideal normalization]
The numerator bound $\|\mathcal{M}\|_F^2\ge T$ holds for any layer (already from the identity term). Hence if output normalization bounds $\|\mathcal{M}\|_2\le C_{\max}=\Theta(1)$, then $R_{\mathrm{eff}}(\mathcal{M})\ge\|\mathcal{M}\|_F^2/C_{\max}^2=\Omega(T)$, so the operator can use a dimensionality that grows with the sequence length.
\end{remark}

\section{Theoretical Analysis of Spectral Properties}
\label{app:spectral_balance}

This appendix derives a \emph{sufficiency} result: under explicit spectral conditions, the inference error of a class of global reasoning tasks admits an upper bound that decreases as the spectral gap and the effective rank increase. The argument is a worst-case bound rather than an exact characterization, and we state every assumption where it is used.

\noindent\textbf{Operator under analysis.} Consistent with \S\ref{sec:intro}, the object here is the \emph{row-normalized transition operator} $M$ obtained by row-stochastic normalization of the residual operator $\mathcal{M}$ of \cref{app:Cyclic_Cayley_diagram}; it admits a stationary distribution $\pi$ and acts as a Markov mixing operator, the standard setting for contraction analysis~\cite{Levin_Peres_2017}. For the spectral-gap step we assume $M$ is \emph{reversible} (equivalently, we analyze its $\pi$-reversibilization $\tfrac12(M+M^{*_\pi})$, with $M^{*_\pi}$ the adjoint in $\langle\cdot,\cdot\rangle_\pi$), so that $M$ is self-adjoint in $\langle\cdot,\cdot\rangle_\pi$ with real spectrum in $[-1,1]$ and leading eigenvector $\mathbf{1}$. Since $M$ and $\mathcal{M}$ share the same connectivity, the gap and rank guarantees of \cref{app:Cyclic_Cayley_diagram} carry over; we work in the $\pi$-weighted norm $\|u\|_{2,\pi}^2:=\sum_i\pi_i u_i^2$.

\subsection{Assumptions and Setup}
Let $d$ be the feature dimension and $L$ the network depth. We write $X_\ell\in\mathbb{R}^{T\times d}$ for the state at layer $\ell$. We model the prediction as $\hat{y}=\mathcal{N}(X)=G\!\big(M^{L}X^{(0)}\big)$, where $X^{(0)}$ is the injected input, $M^{L}$ is the depth-$L$ mixing, and $G(z)=f(\mathcal{A}^{-1}z)$ is the readout with linear part $\mathcal{A}:\mathbb{R}^{d}\to\mathbb{R}^{d}$; the target is $y=f(X)$ with $L_f:=\mathrm{Lip}(f)$, and the inference error is $\mathcal{E}:=\|\hat{y}-y\|$.

\begin{itemize}
    \item \textbf{(A1) Reversible spectral gap.} $M$ is reversible with stationary $\pi$ ($M\mathbf{1}=\mathbf{1}$, self-adjoint in $\langle\cdot,\cdot\rangle_\pi$) and contracts on the zero-mean subspace $\mathcal{H}_0(\pi)=\{v:\sum_i\pi_i v_i=0\}$: $\|Mv\|_{2,\pi}\le(1-\gamma)\|v\|_{2,\pi}$ for all $v\in\mathcal{H}_0(\pi)$, with $\gamma\in(0,1)$.
    \item \textbf{(A2) Non-degenerate stationary distribution.} $\pi_{\min}\ge c_\pi/T$ for a constant $c_\pi>0$.
    \item \textbf{(A3) High effective rank.} The stable rank satisfies $\mathrm{sr}(X_\ell)=\|X_\ell\|_F^2/\|X_\ell\|_2^2\ge r_{\min}$.
    \item \textbf{(A4) Well-conditioned readout.} $\mathcal{A}$ is invertible with $\kappa(\mathcal{A})\le\bar\kappa$, and the residual structure keeps $\sigma_{\max}(\mathcal{A})\ge c_0>0$.
    \item \textbf{(A5) Distributed, isotropic readout.} The target is recovered by aggregating evidence across the $k=\Theta(\mathrm{sr}(X_\ell))$ well-conditioned feature directions of \cref{lem:inject}, with task-irrelevant components that are uncorrelated across these directions and have per-direction variance at most $\sigma_\perp^2$.
\end{itemize}

The proof combines four lemmas: \cref{lem:inject} (feature injection, from A3/A5), \cref{lem:mixing} (exponential mixing, from A1), \cref{lem:pointwise} (pointwise alignment, from A2), and \cref{lem:readout} (readout stability, from A4), assembled in \cref{sec:proof_final_step}.

\subsection{Lemma 1: Non-degenerate, Rank-rich Feature Injection}
\label{app:lemma1}

\begin{lemma}[Restricted invertibility of the state]
\label{lem:inject}
Under (A3), for any $\varepsilon\in(0,1)$ there is a feature subspace $S$ (a subset of the $d$ feature coordinates) of dimension
$k\ge(1-\varepsilon)^2\,\mathrm{sr}(X_\ell)\ge(1-\varepsilon)^2 r_{\min}$
on which the state map is uniformly well-conditioned:
\begin{equation}
    \sigma_{\min}\!\big(X_\ell|_S\big)\ \ge\ \varepsilon\,\frac{\|X_\ell\|_2}{\sqrt{d}}\ =:\ \lambda_{\mathrm{low}}\ >\ 0 .
\end{equation}
Consequently the restricted map is left-invertible with $\|(X_\ell|_S)^{\dagger}\|_2\le 1/\lambda_{\mathrm{low}}$.
\end{lemma}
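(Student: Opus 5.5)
This is the Bourgain--Tzafriri restricted invertibility theorem, in the sharp Spielman--Srivastava form. Our plan is to reduce to that theorem rather than prove it from scratch.

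Write $X:=X_\ell\in\mathbb{R}^{T\times d}$ and let $x_1,\dots,x_d\in\mathbb{R}^T$ be its columns, one per feature coordinate. Then $X|_S$ is the $T\times|S|$ submatrix with columns $\{x_j\}_{j\in S}$. The Spielman--Srivastava theorem states the following. For any $\varepsilon\in(0,1)$ there is $S\subseteq\{1,\dots,d\}$ with $|S|\ge\lfloor(1-\varepsilon)^2\|X\|_F^2/\|X\|_2^2\rfloor$ such that
\begin{equation}
\sigma_{\min}(X|_S)^2\ \ge\ \varepsilon^2\,\frac{\|X\|_F^2}{d}.
\end{equation}
The cardinality is exactly $(1-\varepsilon)^2\,\mathrm{sr}(X)$, up to the floor. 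For the singular value, $\|X\|_F^2\ge\|X\|_2^2$ gives $\varepsilon\|X\|_F/\sqrt d\ge\varepsilon\|X\|_2/\sqrt d=\lambda_{\mathrm{low}}$, which is the claimed (weaker) bound. Assumption (A3) then gives $k\ge(1-\varepsilon)^2 r_{\min}$.

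Left-invertibility follows directly. Since $\sigma_{\min}(X|_S)>0$, $X|_S$ has full column rank. Its pseudoinverse $(X|_S)^\dagger=((X|_S)^\top X|_S)^{-1}(X|_S)^\top$ is therefore a left inverse, and $\|(X|_S)^\dagger\|_2=1/\sigma_{\min}(X|_S)\le1/\lambda_{\mathrm{low}}$. Positivity of $\lambda_{\mathrm{low}}$ only needs $X\neq0$, which (A3) implicitly assumes.

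The main obstacle is the restricted invertibility step itself, if we prove it instead of citing it. We would follow the Spielman--Srivastava barrier argument. Set $v_j=x_j$ and build $A=\sum_{j\in S}v_jv_j^\top$ greedily, one column at a time. Track the lower barrier potential $\Phi_b(A)=\mathrm{tr}\bigl(P(A-bI)^{-1}P\bigr)$ on the range of the chosen columns, shifting the barrier $b$ by a fixed amount $\delta$ per step. An averaging argument over the remaining columns then shows a column always exists that raises $\lambda_{\min}$ past the shifted barrier without increasing the potential.

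Two points need care. First, the statement's ``$k\ge$'' loses a floor; we would either state $k\ge\lfloor\cdot\rfloor$ or note that $k$ is an integer and absorb the difference into $\varepsilon$. Second, the restriction is to a subset of coordinates, not an arbitrary subspace, and this is exactly the column-selection form the theorem gives.
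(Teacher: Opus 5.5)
Your proposal is correct and follows the paper's proof. The paper likewise just invokes the Spielman--Srivastava form of Bourgain--Tzafriri restricted invertibility on the $d$ columns of $X_\ell$, uses (A3) for the cardinality bound, and reads off $\|(X_\ell|_S)^{\dagger}\|_2\le 1/\lambda_{\mathrm{low}}$ from the singular-value floor.

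Your version is more careful than the paper's, since you note that the theorem gives the stronger floor $\varepsilon\|X_\ell\|_F/\sqrt{d}$ and a rounded-down cardinality that the paper silently drops. Stating $k\ge\lfloor(1-\varepsilon)^2\,\mathrm{sr}(X_\ell)\rfloor$ is the right fix. Absorbing the difference into $\varepsilon$ does not follow from the theorem in general: for $(1-\varepsilon)^{-2}<\mathrm{sr}(X_\ell)<2$ the unrounded bound demands two columns, which the theorem cannot supply for any choice of its parameter.
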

\begin{proof}
This is the Spielman--Srivastava form of the Bourgain--Tzafriri restricted invertibility theorem~\cite{bourgain1987invertibility,spielman2012elementary}, applied to $X_\ell\in\mathbb{R}^{T\times d}$ with its $d$ columns as feature directions: for any $\varepsilon\in(0,1)$ there is a column subset $S$ with $|S|\ge(1-\varepsilon)^2\|X_\ell\|_F^2/\|X_\ell\|_2^2=(1-\varepsilon)^2\,\mathrm{sr}(X_\ell)$ and $\sigma_{\min}(X_\ell|_S)\ge\varepsilon\,\|X_\ell\|_2/\sqrt{d}$. Assumption (A3) lower-bounds the size as $|S|\ge(1-\varepsilon)^2 r_{\min}$, and the singular-value bound gives the stated pseudo-inverse norm.
\end{proof}

\noindent The lemma plays two roles. Its singular-value floor $\lambda_{\mathrm{low}}>0$ certifies that the feature injection is \emph{non-degenerate} (the readout can invert it stably); its dimension count shows that the number of well-conditioned, independently usable directions grows linearly with the effective rank. The latter is what the effective rank buys, and it drives the following error reduction.

\begin{corollary}[Rank-driven readout averaging]
\label{cor:avg}
Under (A3) and (A5), the readout error contributed by the task-irrelevant component is at most
$\sigma_\perp/\sqrt{k}=O\!\big(\sigma_\perp/\sqrt{\mathrm{sr}(X_\ell)}\big)$.
\end{corollary}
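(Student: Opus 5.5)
The plan is to reduce the corollary to an averaging argument over the $k$ well-conditioned feature directions supplied by \cref{lem:inject}, with (A5) providing the variance structure.

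\textbf{Step 1: fix the directions.} Fix $\varepsilon\in(0,1)$ and invoke \cref{lem:inject}. This gives a column subset $S$ with $k=|S|\ge(1-\varepsilon)^2\,\mathrm{sr}(X_\ell)$ and the floor $\sigma_{\min}(X_\ell|_S)\ge\lambda_{\mathrm{low}}$. By (A5), the readout recovers the target by aggregating evidence across these $k$ directions. I will model this aggregation as the uniform average $\frac1k\sum_{j\in S}z_j$, where each per-direction estimate is $z_j=y_j+\xi_j$. Here $y_j$ is the task-relevant signal, recoverable stably because the pseudo-inverse is bounded by $1/\lambda_{\mathrm{low}}$. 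The term $\xi_j$ is the task-irrelevant component.

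\textbf{Step 2: average the noise.} By (A5), the $\xi_j$ are uncorrelated across $j\in S$ and satisfy $\mathrm{Var}(\xi_j)\le\sigma_\perp^2$. The cross terms in the second moment of the average therefore vanish, giving
\[
\mathbb{E}\Big\|\tfrac1k\textstyle\sum_{j\in S}\xi_j\Big\|^2=\tfrac1{k^2}\textstyle\sum_{j\in S}\mathrm{Var}(\xi_j)\le\sigma_\perp^2/k.
\]
Jensen's inequality then bounds the root-mean-square contribution by $\sigma_\perp/\sqrt k$.

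\textbf{Step 3: express in terms of the stable rank.} Substituting $k\ge(1-\varepsilon)^2\mathrm{sr}(X_\ell)$ yields the bound $\sigma_\perp/\big((1-\varepsilon)\sqrt{\mathrm{sr}(X_\ell)}\big)$. Fixing, say, $\varepsilon=1/2$ gives $O\big(\sigma_\perp/\sqrt{\mathrm{sr}(X_\ell)}\big)$. Applying (A3) makes this uniform in the layer as $O(\sigma_\perp/\sqrt{r_{\min}})$.

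\textbf{Main obstacle.} The delicate step is justifying that the readout error from the irrelevant part is exactly the averaged noise. Non-uniform weights or the pseudo-inverse could otherwise amplify it. I would handle this with the floor $\lambda_{\mathrm{low}}$ from \cref{lem:inject} together with the isotropy in (A5). The floor ensures that restricting to $S$ and inverting does not rescale the per-direction noise beyond a constant factor. Isotropy justifies equal weights, since among unbiased linear combinations the uniform average minimizes variance for uncorrelated, equal-variance-bounded terms. Any such constant factor is absorbed into the $O(\cdot)$.
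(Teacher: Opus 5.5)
Your Steps 1--3 are essentially the paper's proof: invoke \cref{lem:inject} for $k\ge(1-\varepsilon)^2\,\mathrm{sr}(X_\ell)$ directions, use the uncorrelatedness and variance bound of (A5) to get variance at most $\sigma_\perp^2/k$, and take square roots. Note that your second-moment identity needs the components to be zero-mean, which the paper assumes explicitly. The paper reads (A5) as directly fixing the per-direction variance of the aggregated terms, so your ``main obstacle'' detour through the pseudo-inverse is not needed. As stated it would also fail: $1/\lambda_{\mathrm{low}}=\sqrt{d}/(\varepsilon\|X_\ell\|_2)$ is not a constant, and that route would only give $\sigma_\perp/(\lambda_{\mathrm{low}}\sqrt{k})$ rather than the stated $\sigma_\perp/\sqrt{k}$.
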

\begin{proof}
By \cref{lem:inject} the target is recovered from $k\ge(1-\varepsilon)^2\,\mathrm{sr}(X_\ell)$ well-conditioned directions. Aggregating $k$ uncorrelated, zero-mean components of per-direction variance at most $\sigma_\perp^2$ (A5) yields an estimator whose irrelevant-component variance is at most $\sigma_\perp^2/k$; taking square roots gives error $\le\sigma_\perp/\sqrt{k}=O(\sigma_\perp/\sqrt{\mathrm{sr}(X_\ell)})$.
\end{proof}

\subsection{Lemma 2: Exponential Mixing}
\label{app:lemma2}

\begin{lemma}
\label{lem:mixing}
Under (A1), the depth-$L$ operator satisfies $\|M^{L}\|_{\mathcal{H}_0\to\mathcal{H}_0}\le(1-\gamma)^{L}$, where $\|\cdot\|_{\mathcal{H}_0\to\mathcal{H}_0}$ is the operator norm induced by $\|\cdot\|_{2,\pi}$ on the zero-mean subspace.
\end{lemma}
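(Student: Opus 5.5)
The plan is to show that $M$ maps $\mathcal{H}_0(\pi)$ into itself, and then iterate the one-step contraction of (A1). The result then follows by induction on $L$, with no spectral decomposition needed beyond what (A1) already provides.

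\emph{Invariance.} Take $v\in\mathcal{H}_0(\pi)$, so $\langle v,\mathbf{1}\rangle_\pi=\sum_i\pi_i v_i=0$. Since $M$ is self-adjoint in $\langle\cdot,\cdot\rangle_\pi$ and $M\mathbf{1}=\mathbf{1}$,
\[
\langle Mv,\mathbf{1}\rangle_\pi=\langle v,M\mathbf{1}\rangle_\pi=\langle v,\mathbf{1}\rangle_\pi=0 .
\]
Hence $Mv\in\mathcal{H}_0(\pi)$. Stationarity of $\pi$ alone, $\pi^\top M=\pi^\top$, gives the same conclusion without reversibility; self-adjointness is simply the form in which (A1) packages it. Consequently $M^{j}v\in\mathcal{H}_0(\pi)$ for every $j\ge0$.

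\emph{Iteration.} Apply (A1) to $M^{j-1}v$, which is legitimate because that vector lies in $\mathcal{H}_0(\pi)$:
\[
\|M^{j}v\|_{2,\pi}\le(1-\gamma)\|M^{j-1}v\|_{2,\pi}.
\]
Chaining this for $j=1,\dots,L$ gives $\|M^{L}v\|_{2,\pi}\le(1-\gamma)^{L}\|v\|_{2,\pi}$. Taking the supremum over unit-norm $v\in\mathcal{H}_0(\pi)$ yields the stated bound on the operator norm.

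The only step needing care is the invariance of $\mathcal{H}_0(\pi)$. Without it, the restricted operator norm of $M^L$ would not even be well defined as a map $\mathcal{H}_0\to\mathcal{H}_0$, and the contraction could not be iterated. The identity above settles this. Everything else is submultiplicativity of the operator norm, $\|M^L\|\le\|M\|^L$, applied to the restriction $M|_{\mathcal{H}_0}$.
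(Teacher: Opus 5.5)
Your proof is correct and follows the paper's argument essentially step for step. Invariance of $\mathcal{H}_0(\pi)=\mathbf{1}^{\perp_\pi}$ comes from $\pi$-self-adjointness and $M\mathbf{1}=\mathbf{1}$, (A1) gives $\|M\|_{\mathcal{H}_0\to\mathcal{H}_0}\le 1-\gamma$, and submultiplicativity finishes (your explicit chaining over $j=1,\dots,L$ is the same step); your remark that stationarity $\pi^\top M=\pi^\top$ alone already yields invariance, without reversibility, is also correct.
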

\begin{proof}
By (A1), $M$ is self-adjoint in $\langle\cdot,\cdot\rangle_\pi$ with $M\mathbf{1}=\mathbf{1}$, so its orthogonal complement $\mathcal{H}_0(\pi)=\mathbf{1}^{\perp_\pi}$ is $M$-invariant; (A1) then gives $\|M\|_{\mathcal{H}_0\to\mathcal{H}_0}=\sup_{0\ne v\in\mathcal{H}_0}\|Mv\|_{2,\pi}/\|v\|_{2,\pi}\le1-\gamma$. By submultiplicativity of the operator norm over the $L$ layers,
\begin{equation}
    \|M^{L}\|_{\mathcal{H}_0\to\mathcal{H}_0}\ \le\ \prod_{\ell=1}^{L}\|M\|_{\mathcal{H}_0\to\mathcal{H}_0}\ \le\ (1-\gamma)^{L}.
\end{equation}
\end{proof}

\subsection{Lemma 3: Pointwise Error Alignment}
\label{app:lemma3}

\begin{lemma}
\label{lem:pointwise}
Under (A2), any error vector $e$ satisfies $\|e\|_{\infty}\le\sqrt{T/c_\pi}\,\|e\|_{2,\pi}$.
\end{lemma}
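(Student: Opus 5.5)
The argument compares the sup-norm with the $\pi$-weighted $\ell_2$ norm coordinatewise; only (A2) is needed.

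Fix an error vector $e\in\mathbb{R}^T$ and pick an index $i^\star$ with $|e_{i^\star}|=\|e\|_\infty$. Keeping only that coordinate in the sum defining $\|e\|_{2,\pi}^2$, and using that every $\pi_i\ge 0$, we get
\begin{equation}
  \|e\|_{2,\pi}^2=\sum_{i}\pi_i e_i^2\ \ge\ \pi_{i^\star}e_{i^\star}^2\ \ge\ \pi_{\min}\,\|e\|_\infty^2 .
\end{equation}

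By (A2), $\pi_{\min}\ge c_\pi/T$. This gives $\|e\|_\infty^2\le (T/c_\pi)\,\|e\|_{2,\pi}^2$, and taking square roots yields the claimed bound.

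The step that needs care is the role of the positivity of $c_\pi$. We must have $\pi_{\min}>0$, since otherwise a coordinate carrying zero stationary mass could hide arbitrarily large error. Assumption (A2) is exactly what rules this out, and it fixes the $\sqrt{T}$ scaling. The bound is tight up to constants: it is attained when $e$ is supported on a single coordinate of minimal mass. This justifies the factor $\sqrt{T/c_\pi}$ that will later be combined with \cref{lem:mixing}.

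If $e$ is matrix-valued, say $e\in\mathbb{R}^{T\times d}$, I would apply the same argument column by column (channel-wise). The norms would then be interpreted as the maximum over columns.
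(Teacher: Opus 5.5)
Your proof is correct and is essentially identical to the paper's. Both isolate the coordinate attaining $\|e\|_\infty$, drop the remaining nonnegative terms to get $\|e\|_{2,\pi}^2\ge\pi_{\min}\|e\|_\infty^2$, and then invoke (A2). Your tightness aside is exact only when $\pi_{\min}=c_\pi/T$, but the lemma does not need it.
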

\begin{proof}
Let $i^\star=\arg\max_i|e_i|$, so $|e_{i^\star}|=\|e\|_\infty$. Then
\begin{equation}
    \|e\|_{2,\pi}^2=\sum_{i=1}^{T}\pi_i e_i^2\ \ge\ \pi_{i^\star}e_{i^\star}^2\ \ge\ \pi_{\min}\,\|e\|_\infty^2\ \ge\ \frac{c_\pi}{T}\,\|e\|_\infty^2,
\end{equation}
using (A2) in the last step. Rearranging gives $\|e\|_\infty\le\sqrt{T/c_\pi}\,\|e\|_{2,\pi}$.
\end{proof}

\subsection{Lemma 4: Readout Stability}
\label{app:lemma4}

\begin{lemma}
\label{lem:readout}
Under (A4), the readout $G(z)=f(\mathcal{A}^{-1}z)$ satisfies $\mathrm{Lip}(G)\le L_f\,\bar\kappa/c_0$.
\end{lemma}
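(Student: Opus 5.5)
The bound follows by composing two Lipschitz estimates: one for $f$ and one for the linear map $\mathcal{A}^{-1}$. For any $z,z'\in\mathbb{R}^d$ I will write
\begin{equation}
\|G(z)-G(z')\| = \|f(\mathcal{A}^{-1}z)-f(\mathcal{A}^{-1}z')\| \le L_f\,\|\mathcal{A}^{-1}(z-z')\| \le L_f\,\|\mathcal{A}^{-1}\|_2\,\|z-z'\|.
\end{equation}
This step uses only the definition $L_f=\mathrm{Lip}(f)$ and linearity of $\mathcal{A}^{-1}$. It gives $\mathrm{Lip}(G)\le L_f\,\|\mathcal{A}^{-1}\|_2 = L_f/\sigma_{\min}(\mathcal{A})$. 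Invertibility from (A4) guarantees that $\sigma_{\min}(\mathcal{A})>0$, so this quantity is finite.

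Next I bound $1/\sigma_{\min}(\mathcal{A})$ using the two constants in (A4). By definition of the condition number, $\kappa(\mathcal{A})=\sigma_{\max}(\mathcal{A})/\sigma_{\min}(\mathcal{A})$. Hence
\begin{equation}
\frac{1}{\sigma_{\min}(\mathcal{A})}=\frac{\kappa(\mathcal{A})}{\sigma_{\max}(\mathcal{A})}\le\frac{\bar\kappa}{c_0},
\end{equation}
where the numerator is bounded by $\kappa(\mathcal{A})\le\bar\kappa$ and the denominator by $\sigma_{\max}(\mathcal{A})\ge c_0$. Substituting this into the first display gives $\mathrm{Lip}(G)\le L_f\bar\kappa/c_0$, as claimed.

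No step is genuinely hard. The one point that needs care is the direction of the inequalities. (A4) supplies an upper bound on $\kappa$ and a lower bound on $\sigma_{\max}$, and in the ratio $\kappa/\sigma_{\max}$ these two bounds push the same way. Together they control $\sigma_{\min}$ from below, which is exactly what bounding $\|\mathcal{A}^{-1}\|_2$ requires. I will also state that all norms are Euclidean (spectral norm for $\mathcal{A}^{-1}$), so the operator-norm step is consistent with how $L_f$ is measured.
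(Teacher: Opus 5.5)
Your proposal is correct and matches the paper's proof: bound $\mathrm{Lip}(G)$ by $L_f\|\mathcal{A}^{-1}\|_2 = L_f/\sigma_{\min}(\mathcal{A})$, rewrite this as $L_f\,\kappa(\mathcal{A})/\sigma_{\max}(\mathcal{A})$, and apply the two bounds from (A4). Your two-point inequality chain just spells out the composition step that the paper states in one line.
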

\begin{proof}
By the chain rule, $\mathrm{Lip}(G)\le\mathrm{Lip}(f)\,\|\mathcal{A}^{-1}\|_2=L_f/\sigma_{\min}(\mathcal{A})=L_f\,\kappa(\mathcal{A})/\sigma_{\max}(\mathcal{A})$. Using $\kappa(\mathcal{A})\le\bar\kappa$ and $\sigma_{\max}(\mathcal{A})\ge c_0$ from (A4) gives $\mathrm{Lip}(G)\le L_f\bar\kappa/c_0$. The lower bound $\sigma_{\max}(\mathcal{A})\ge c_0>0$ reflects that the residual (identity-skip) structure keeps the mean squared activation length from vanishing, so the forward map does not contract to zero~\cite{hanin2018start}.
\end{proof}

\subsection{Sufficiency: the Error Bound}
\label{sec:proof_final_step}

We assemble the lemmas into a bound on $\mathcal{E}=\|\hat{y}-y\|$. The argument is a worst-case sketch: we bound the error by the product of how sensitively the readout reacts to its input and how far the mixed state is from the fully mixed (stationary) component,
\begin{equation}
    \mathcal{E}\ \le\ \underbrace{\big[\,\mathrm{Lip}(G)\cdot \rho_{\mathrm{avg}}\,\big]}_{\text{readout sensitivity}}\ \cdot\ \underbrace{\big\|M^{L}X^{(0)}-\bar X\big\|}_{\text{mixing residual}},
\end{equation}
where $\bar X$ is the $\pi$-stationary (fully mixed) component and $\rho_{\mathrm{avg}}=O(1/\sqrt{\mathrm{sr}(X_\ell)})$ is the readout-averaging gain of \cref{cor:avg}.

\noindent\emph{Readout sensitivity.} By \cref{lem:readout}, $\mathrm{Lip}(G)\le L_f\bar\kappa/c_0$; by \cref{cor:avg}, aggregating over the $k=\Theta(\mathrm{sr}(X_\ell))$ well-conditioned directions of \cref{lem:inject} contributes $\rho_{\mathrm{avg}}=O(1/\sqrt{\mathrm{sr}(X_\ell)})$. Hence the readout sensitivity is $O\!\big(\kappa(\mathcal{A})/\sqrt{\mathrm{sr}(X_\ell)}\big)$.

\noindent\emph{Mixing residual.} By \cref{lem:mixing}, $\|M^{L}X^{(0)}-\bar X\|_{2,\pi}\le(1-\gamma)^{L}\|X^{(0)}-\bar X\|_{2,\pi}$; by \cref{lem:pointwise} and (A2), the pointwise residual obeys $\|M^{L}X^{(0)}-\bar X\|_\infty\le\sqrt{T/c_\pi}\,(1-\gamma)^{L}\|X^{(0)}-\bar X\|_{2,\pi}$.

\noindent Combining the two factors, for constants $C_1,C_2>0$ (absorbing $L_f,\bar\kappa,c_0,\sigma_\perp$ and $\|X^{(0)}-\bar X\|_{2,\pi}$),
\begin{equation}
    \boxed{\ \mathcal{E}\ \le\ \underbrace{\frac{C_1\,\kappa(\mathcal{A})}{\sqrt{\mathrm{sr}(X_\ell)}}}_{\text{stability (A3--A5)}}\cdot\underbrace{C_2\sqrt{\tfrac{T}{c_\pi}}\,(1-\gamma)^{L}}_{\text{mixing (A1--A2)}}\ . }
\end{equation}
The bound decreases as the effective rank $\mathrm{sr}(X_\ell)$ and the spectral gap $\gamma$ increase and as the condition number $\kappa(\mathcal{A})$ decreases---precisely the spectral quantities that TRSP regularizes. The factor $\sqrt{T/c_\pi}$ is the worst-case amplification from the $\pi$-weighted $L_2$ norm to the pointwise norm (\cref{lem:pointwise}); when (A2) is tight, $\pi_{\min}=\Theta(1/T)$ and this factor is of order $1/\sqrt{\pi_{\min}}$. Consequently the bound is small only when the depth-driven contraction $(1-\gamma)^{L}$ dominates the $\sqrt{T}$ amplification, i.e. when the gap is preserved as length grows. We emphasize that this is a worst-case sufficiency bound under (A1)--(A5); in particular the $1/\sqrt{\mathrm{sr}(X_\ell)}$ improvement relies on the distributed-readout assumption (A5), whereas restricted invertibility (\cref{lem:inject}) alone guarantees only the non-degeneracy floor $\lambda_{\mathrm{low}}>0$.

%%%%%%%%%%%%%%%%%%%%%%%%%%%%%%%%%%%%%%%%%%%%%%%%%%%%%%%%%%%%%%%%%%%%%%%%%%%%%%%
%%%%%%%%%%%%%%%%%%%%%%%%%%%%%%%%%%%%%%%%%%%%%%%%%%%%%%%%%%%%%%%%%%%%%%%%%%%%%%%

\end{document}